\documentclass[journal]{IEEEtran}
\usepackage{amsmath,amsfonts,amssymb,amsthm,mathrsfs}
\usepackage{algorithm}
\usepackage{algpseudocode}
\usepackage{array}
\usepackage[caption=false,font=normalsize,labelfont=sf,textfont=sf]{subfig}
\usepackage{textcomp}
\usepackage{stfloats}
\usepackage{url}
\usepackage{verbatim}
\usepackage{graphicx}
\usepackage{cite}
\usepackage{multirow}
\usepackage{threeparttable}
\usepackage{supertabular}%xtable
\usepackage[table,xcdraw]{xcolor}
\usepackage{hyperref}
\usepackage{orcidlink}
\hypersetup{colorlinks=true,
	linkcolor=blue,
	anchorcolor=blue,
	citecolor=blue}
\usepackage{lipsum}
\usepackage{makecell}
\usepackage{pifont}
\usepackage{nomencl}
\usepackage{siunitx}
\makenomenclature

\newtheorem{theorem}{Theorem}

\newtheorem{proposition}{Proposition}
\newtheorem{lemma}{Lemma}

\newtheorem{corollary}{Corollary}

\newcommand{\mb}[1]{\mathbf{#1}} 
\newcommand{\bs}[1]{\boldsymbol{#1}}
\newcommand{\mr}[1]{\mathrm{#1}} 
\newcommand{\mc}[1]{\mathcal{#1}}

\newcommand{\pt}{\partial}
\newcommand{\mrd}{\mathrm{d}}

\newcommand{\lt}{\left(}
\newcommand{\rt}{\right)}
\DeclareMathOperator{\tr}{tr}

% \hyphenation{op-tical net-works semi-conduc-tor IEEE-Xplore}
% updated with editorial comments 8/9/2021

\begin{document}
\title{Enhancing Kinematic Performances of Soft Continuum Robots for Magnetic Actuation}

\author{
Zhiwei Wu$^{\orcidlink{https://orcid.org/0000-0002-3957-3063}}$,~\IEEEmembership{Graduate Student Member,~IEEE}, Jiahao~Luo$^{\orcidlink{0009-0003-3197-1782}}$, Siyi~Wei$^{\orcidlink{0009-0005-9967-4025}}$, and Jinhui~Zhang$^{\orcidlink{0000-0002-2405-894X}}$.
        % <-this % stops a space
% \thanks{This paper was produced by the IEEE Publication Technology Group. They are in Piscataway, NJ.}% <-this % stops a space
% \thanks{Manuscript received xx xx, xxxx; revised xx xx, xxxx.}
\thanks{
Z. Wu, J. Luo, S. Wei, and J. Zhang are with the school of automation, Beijing Institute of Technology, Beijing, China. (e-mail: zhiweiwu.cn@outlook.com, luojiahao.edu@outlook.com, siyiwei@bit.edu.cn).
}
\thanks{
Corresponding author: Jinhui Zhang (e-mail: zhangjinh@bit.edu.cn).
}
}

% The paper headers
% \markboth{Journal of \LaTeX\ Class Files,~Vol.~14, No.~8, August~2021}%
% {Shell \MakeLowercase{\textit{et al.}}: A Sample Article Using IEEEtran.cls for IEEE Journals}

% \IEEEpubid{0000--0000/00\$00.00~\copyright~2021 IEEE}
% Remember, if you use this you must call \IEEEpubidadjcol in the second
% column for its text to clear the IEEEpubid mark.

\maketitle
% 弱场和强场之间的过渡
\begin{abstract}
Soft continuum robots achieve complex deformation through elastic equilibrium, making their reachable motions governed jointly by structural design and actuation-induced mechanics. 
This work develops a general formulation that integrates equilibrium computation with kinematic performances by evaluating Riemannian Jacobian spectra on the equilibrium manifold shaped by internal/external loading. 
The resulting framework yields a global performance functional that directly links structural parameters, actuation inputs, and the induced configuration space geometry.
We apply this general framework to magnetic actuation.
Analytical characterization is obtained under weak uniform fields, revealing optimal placement and orientation of the embedded magnet with invariant scale properties.
To address nonlinear deformation and spatially varying fields, a two-level optimization algorithm is developed that alternates between energy based equilibrium search and gradient based structural updates.
Simulations and physical experiments across uniform field, dipole field, and multi-magnet configurations demonstrate consistent structural tendencies: aligned moments favor distal or mid-distal solutions through constructive torque amplification, whereas opposing moments compress optimal designs toward proximal regions due to intrinsic cancellation zones. 
\end{abstract}

\begin{IEEEkeywords}
Soft continuum robots, kinematic performances, structural optimization, magnetic actuation.
\end{IEEEkeywords}

\section{Introduction}

\IEEEPARstart{S}{oft} continuum robots have gained growing attention for tasks involving compliant interaction, dexterous access, and safe manipulation in complex or confined environments. 
Their ability to realize smooth, multi-segment deformation without rigid joints supports applications in minimally invasive navigation, inspection, and human-centered tasks. 
Recent advancements have significantly improved perception, control, and structural adaptability in soft continuum systems. 
Examples include proprioceptive designs capable of simultaneous shape estimation and contact detection \cite{Wang2024Sensing}, learning based deformation estimation under multi-point interaction \cite{Zhang2025Deformation}, optimal control strategies for rotation-coupled bending \cite{Zhao2024Controller}, and variable curvature mechanisms that enhance configurability \cite{Wang2024Soft}. 
These developments collectively broaden the functional space of soft continuum manipulation and motivate new models that can capture their nonlinear, compliant behavior.

Magnetic actuation has emerged as a particularly promising approach for soft continuum robots. 
Compared with tendon-driven or pneumatic mechanisms, magnetic actuation allows wireless control, allows miniaturization to submillimeter scales, and reduces onboard complexity. 
Recent studies demonstrate substantial gains in workspace, dexterity, and navigation capability across diverse magnetic architectures. 
Robots with distributed magnetization patterns have shown enhanced deformation diversity \cite{Huang2024Design}, movable opposite magnets have enabled workspace expansion \cite{Park2024Workspace}, and rotating field-assisted designs have achieved buckling free insertion in tortuous lumens \cite{Chathuranga2024Assisted}. 
High dexterity helical magnetic robots further extend steerability in endovascular navigation \cite{Dreyfus2024Dexterous}. 
% Complementary to these developments, closed-loop shape forming has been demonstrated under homogeneous fields \cite{Francescon2025Closed}, while embedded localization algorithms and dual magnet actuation strategies have improved position and orientation estimation \cite{daVeiga2025Magnetic}. These advances indicate the growing maturity of magnetically actuated continuum systems in both diagnostic and interventional scenarios.
In parallel, modeling frameworks for magnetic and elastic coupling have evolved rapidly. 
Analytical and semi-analytical models have been introduced for large deformation magnetoelastic behavior, including kinetostatic formulations capable of capturing multistage bending under complex dipole fields \cite{Zhang2025Kinetostatics}, modular magnetization profile evaluation \cite{Cao2025Magnetic}, and optimized structural patterns for enhanced deformability. 
Additional works explore interactions between magneto elasticity, external loading, and nonlinear geometries, providing foundations for real-time control and structural tuning. 
Together, these contributions highlight the importance of accurate equilibrium modeling and structural design for improving kinematic performance.

However, several core challenges remain unresolved. 
Magnetic soft continuum robots do not operate on a fixed kinematic map. 
Their attainable configurations are determined by the equilibrium manifold produced by the interaction between elastic restoring forces and magnetic loading. 
This manifold is highly nonlinear and sensitive to magnet placement, field variation, torque cancellation, and bifurcation events. 
Existing design methods often rely on task-specific heuristics \cite{Park2024Workspace}, empirical tuning of magnet distributions \cite{Huang2024Design}, data-driven control \cite{Tang2024Learning,Tang2025Learning}, or simplified kinematic approximations that overlook the intrinsic coupling between mechanics and actuation. 
Learning-based or adaptive controllers \cite{Kang2025Adaptive,Wang2024Sensing} enhance robustness, yet they offer limited insight into the structural sources underlying the equilibrium geometry.
Moreover, commonly used performance indices such as manipulability and dexterity, being defined in Euclidean coordinates, may only partially capture the curvature of the equilibrium manifold under magnetic loading.
As a result, structural optimization and performance evaluation are often decoupled, limiting the ability to systematically ensure actuation feasibility, stability, and kinematic quality.

These gaps motivate the need for a unified formulation that couples equilibrium with kinematic performance. 
Since the robot’s motion is realized only through equilibrium configurations, performance should be evaluated on the manifold shaped by internal and external loading. 
% Establishing such a framework requires embedding magnetic energy-based equilibrium analysis within a differential geometric representation of the robot’s configuration space. 
It further requires a structural optimization scheme capable of handling nonlinear magnetoelastic interactions, diverse field geometries, and multi-magnet architectures recently explored in continuum and catheter-like robots \cite{Francescon2025Closed,Park2024Workspace,Chathuranga2024Assisted}.

In this study, we develop a unified framework and demonstrate its theoretical and practical utility across homogeneous field and dipole field actuation. 
We leverage Riemannian Jacobian spectra to characterize manipulability and dexterity on the equilibrium manifold and integrate these indices with an energy-based equilibrium computation. This formulation enables physically consistent performance evaluation and reveals structural trends that persist across actuation regimes, including robots with modular magnetization segments \cite{Cao2025Magnetic}, multi-magnet bodies \cite{Huang2024Design}, and clinically oriented continuum tools \cite{VanLewen2025Real,Xu2025Automatic}.
The main contributions of this work are summarized as follows:
\begin{enumerate}
    
    \item We establish a unified structural optimization framework that couples magnetic energy-based equilibrium analysis with Riemannian Jacobian spectral evaluation on the equilibrium manifold. 
    By formulating the equilibrium set as an actuation-dependent submanifold and computing kinematic performance on its pullback metric. 
    
    \item We derive analytical results for soft continuum robots actuated by weak uniform magnetic fields, including closed-form optimality conditions for magnet placement and orientation. 
    These theoretical characterizations reveal scale-invariant structural principles that are further validated through numerical optimization.
    
    \item We extend the proposed framework to general dipole-generated magnetic fields and apply the full two-level optimization scheme to multi-magnet configurations. 
    Comprehensive numerical studies and corresponding physical experiments demonstrate consistent trends in equilibrium behavior and kinematic performance across diverse actuation paths.
\end{enumerate}

The remainder of this paper is organized as follows. 
Sec.~\ref{sec:II}  presents the modeling framework, including the pseudo-rigid-body representation, magnetic torque formulation, and energy-based equilibrium analysis. 
Sec.~\ref{sec:III} defines the kinematic performance metrics and formulates the structural optimization problem. 
Sec.~\ref{sec:IV} introduces the analytical and numerical optimization methods. 
Sec.~\ref{sec:V} provides case studies and numerical validations under uniform and general magnetic fields. 
Finally, Sec.~\ref{sec:VI} discusses the implications of the proposed framework and concludes the paper.
Throughout the technical description, vectors and matrices are defined using lowercase and uppercase bold symbols, respectively, whereas scalars are not bolded.

% Add a comparison table between methods
\section{Kinematics of Soft Continuum Robots}
\label{sec:II}
\subsection{Kinematic Mapping}

\begin{figure}[t]
    \centering
    \includegraphics[width=\columnwidth]{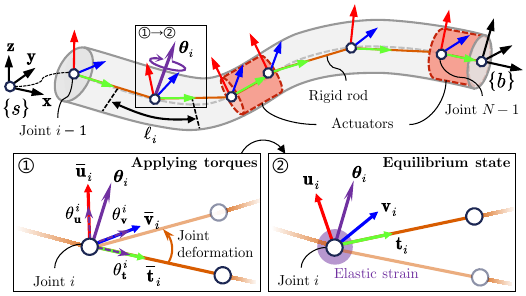}
    \caption{Illustration of the basic concept of the PRB model on describing the SCR. 
    The soft body is discretized into rigid rods connected by joints with local frames. 
    The configuration is represented by the joint variables $\theta_\mb{t}^i$, $\theta_\mb{u}^i$, and $\theta_\mb{v}^i$.
    The system reaches an equilibrium state (\ding{173}) when the torques of the actuators (\ding{172}) are balanced by the elastic restoring moments, and the joint deformation is stored as elastic strain.
    }
    \label{fig:configurationspace}
\end{figure}

We consider SCRs with slender structures, whose length is significantly greater than their radius. 
Certain well-founded assumptions contribute to simplifying the model \cite{oreillyModelingNonlinearProblems2017}, such as the rod remaining free of shear deformation and its cross-section always being orthogonal to the centerline. 
Additionally, some classical PRB models\cite{roesthuisSteeringMultisegmentContinuum2016} neglect both twisting and stretching of the continuum. 
In contrast, we assume only that the continuum is inextensible, with twisting characterized by extended variables \cite{pittiglio2023closed}. 

Within the PRB model framework, the SCR with length $L$ is described by a series of rigid rods. 
As shown in \autoref{fig:configurationspace}, $N$ spherical joints are distributed along the centerline of the SCR at points of interest, connected by $N$ rigid rods with lengths $l_i, i=0,\dots,N-1$. 
We use the Voronoi region of the joint to scale the potential energies, which is defined as $\ell_i=(l_{i-1}+l_{i})/2,i=0,\dots,N-1$. An orthogonal local frame $\{\bar{\mb{t}}_i,\bar{\mb{u}}_i,\bar{\mb{v}}_i=\bar{\mb{t}}_i\times\bar{\mb{u}}_i\}$ is attached to each joint, with overbars indicating that the SCR is in a straightenedl reference state, and the tangent vector $\bar{\mb{t}}_i$ remains aligned with the direction of the successive rod. 
The local frame is used to determine the rotation axis for each joint, expressed as $\bs\theta_i=\theta_{\mb{t}}^i\bar{\mb{t}}+\theta_{\mb{u}}^i\bar{\mb{u}}+\theta_{\mb{v}}^i\bar{\mb{v}}\in\mathbb{R}^3$, where $\theta_{\mb{t}}^i,\theta_{\mb{v}}^i,\theta_{\mb{u}}^i\in\mathbb{R}$ are three independent joint variables. 
The associated rotation matrix is defined as $\mb{R}_i(\bs\theta_i)=\mb{e}^{[\bs\theta_i]_\times}\in \mr{SO}(3)$, and we define a useful shorthand notation $(\mb{R}_i^j)$ to represent the product of rotation matrices that $\mb{R}_i^j=\mb{R}_i\mb{R}_{i+1}\cdots\mb{R}_j$. With the notation, local frame axes in a deformed state (\ding{173}), such as the tangent vector, can be easily computed by $\mb{t}_i=\mb{R}_{0}^{i}\bar{\mb{t}}_i$ from the reference state (\ding{172}). The resulting elastic strain on the $i$th joint will be used to compute the elastic potential energy later. 

The configuration space $(\bs\theta)$ of the SCR can be written as
\begin{equation}
    \label{eqn:c-space}
    \bs\theta=\left[\bs\theta_0, \dots,\bs\theta_{N-1}\right]^\top\in\mathbb{T}\subset\mathbb{R}^{3N}.
\end{equation}
% The twist motion is considered an extra rotation at each rod with the axis $\bs\gamma_i=\gamma_i\bar{\mb{t}}_i$. 
Let a fixed base frame $\{s\}$ attach to joint $0$, frame $\{b\}$ attach to the distal end, and ${}_{b}^{s}\bar{\mb{H}}\in \mr{SE}(3)$ be the associated distal pose when the SCR is in its zero position. 
The screw axes in the fixed base frame are of the form
\begin{equation*}
    % \label{eqn:xiandzeta}
    \bs\xi_i=\begin{bmatrix}
        \bs\theta_i\\-\bs\theta_i\times\bar{\mb{p}}_i
    \end{bmatrix}\in\mathbb{R}^6
    % ,\quad\bs\zeta_i=\begin{bmatrix}
    %     \bs\gamma_i\\-\bs\gamma_i\times\bar{\mb{p}}_i
    % \end{bmatrix}
\end{equation*}
where $\bar{\mb{p}}_i$ is the reference position of the $i$th joint associated with $l_i$. 
Then, the forward kinematics of the SCR can be represented in the following product of exponentials:
\begin{equation}
\label{eqn:forwardKine}
    % \mb{H}_{0}^{N-1}(\bs\theta)=\mb{e}^{\bs\xi_0^\wedge}\mb{e}^{\bs\xi_{1}^\wedge}\cdots\mb{e}^{\bs\xi_{N-2}^\wedge}\mb{e}^{\bs\zeta_{N-2}^\wedge}\bar{\mb{H}}_{0}^{N-1}.
    {}_{b}^{s}\mb{H}(\bs\theta)=\mb{e}^{\bs\xi_0^\wedge}\mb{e}^{\bs\xi_{1}^\wedge}\cdots\mb{e}^{\bs\xi_{N-1}^\wedge}{}_{b}^{s}\bar{\mb{H}}(\mb{0})\in \mr{SE}(3).
\end{equation}
% Noteworthy, only the twist angle corresponding to the target joint is involved in \eqref{eqn:forwardKine}. An insightful view is to interpret it as a correction to the Bishop frame's forward iteration. 
% The manipulator Jacobian is essential for analyzing singularities and controlling the SCR's motion with velocity inputs. Here, 
We next present the analytical expression of the space Jacobian $\mb{J}_{\bs\theta} (\bs\theta)\in \mathbb{R}^{6 \times 3N}$, defined in fixed frame coordinates. 
The space Jacobian relates small changes in joint space to distal pose as $\delta\mb{h} = \mb{J}_{\bs\theta} \delta\bs\theta$, where $\mb{h}= \mr{Log}\left( {}_{b}^{s}\mb{H} \right)\in \mathbb{R}^6$ represents the distal pose in Cartesian coordinates. 
The Jacobian matrix is partitioned into blocks, with each block corresponding to a joint space group:
% $\bs\theta_i=[\gamma_i,\theta_{\mb{u}}^i,\theta_{\mb{v}}^i]^\top$
\begin{equation}
    \label{eqn:blockJacobian}
    % \mb{J}_{\bs\theta}^{s}=\left[
    %     \mb{J}_{\bs\theta_0}^{s},\ \mb{J}_{\bs\theta_1}^{s},\ \cdots \ ,\mb{J}_{\bs\theta_{N-2}}^{s}
    % \right].
    \mb{J}_{\bs\theta}^{s}=\left[
        \frac{\pt\mb{h}}{\pt\bs\theta_0},\ \frac{\pt\mb{h}}{\pt\bs\theta_1},\ \cdots \ ,\frac{\pt\mb{h}}{\pt\bs\theta_{N-1}}
    \right]
\end{equation}
% The block elements in \eqref{eqn:blockJacobian} are identified as
% \begin{equation}
% \mb{J}_{\bs\theta_i}^{s}=
%     \begin{aligned}
%         \left[
%             \frac{\pt\mb{h}}{\pt\gamma_i},\  \frac{\pt\mb{h}}{\pt\bs\theta_i^\top}
%             \left[\bar{\mb{u}}_i, \bar{\mb{v}}_i\right]
%         \right],
%     \end{aligned}
% \end{equation}
where the derivative with respect to $\bs\theta_i$ is given by
\begin{equation*}
    \frac{\pt\mb{h}}{\pt\bs\theta_i}=\mb{Ad}_{\mb{e}^{\bs\xi_0^\wedge}\cdots \mb{e}^{\bs\xi_{i-1}^\wedge}}\mb{J}_{\mb{H}_l}^{\bs\xi_i}\frac{\pt\bs\xi_i}{\pt\bs\theta_i}\in\mathbb{R}^{6\times 3}
\end{equation*}
with $\frac{\partial\bs\xi_i}{\partial\bs\theta_i}=\begin{bmatrix}
        \mb{I}_3,\  -[\bar{\mb{p}}_i]_\times
\end{bmatrix}^\top$, and $\mb{J}_{\mb{H}_l}^{\bs\xi_i}$ is the left Jacobian of the $\mr{SE}(3)$ manifold as discussed in \cite{barfoot2014associating}. 
% For the derivative with respect to $\gamma_i$, the expression is:
% \begin{equation}
%     \frac{\pt\mb{h}}{\pt\gamma_i}=\left\{\begin{aligned}
%         &\mb{Ad}_{\mb{e}^{\bs\xi_0^\wedge}\cdots \mb{e}^{\bs\xi_{N-2}^\wedge}\mb{e}^{\bs\zeta_{N-2}^\wedge}}\bs\zeta_{N-2},\ i=N-2\\
%         &\mb{0}_{6\times 1},\ \mr{otherwise}\\
%     \end{aligned}
%     \right..
% \end{equation}
% Although only $\gamma_{N-2}$ explicitly contributes to the computation of the Jacobian, the remaining $\gamma_i$ will implicitly influence its update in the equations of motion. Moreover, if the twist angles are ignored, \eqref{eqn:blockJacobian} can also serve as the Jacobian matrix under the twist-free condition, with some similarities found in \cite{roesthuisSteeringMultisegmentContinuum2016}. 
To conclude, we note the following relationship between the space Jacobian and the body Jacobian $(\mb{J}_{\bs\theta}^{b}(\bs\theta))$ in case of need:
\begin{equation}
    \label{eqn:bodyJacobian}
    % \mb{J}_{\bs\theta}^{s}=\mb{Ad}_{\mb{e}^{\bs\xi_0^\wedge}\cdots \mb{e}^{\bs\xi_{N-2}^\wedge}\mb{e}^{\bs\zeta_{N-2}^\wedge}}\mb{J}_{\bs\theta}^{b}.
    \mb{J}_{\bs\theta}^{b}=\mb{Ad}_{\mb{e}^{\bs\xi_0^\wedge}\cdots \mb{e}^{\bs\xi_{N-1}^\wedge}}^{-1}\mb{J}_{\bs\theta}^{s}.
\end{equation}
Also for brevity, $\mb{J}_{\bs\theta}$ will be referred to $\mb{J}_{\bs\theta}^{b}$ unless specified. 

% 这里不应该是\mb{u}，与先前的矛盾了
\subsection{Equations of Motion}

The equations of motion are derived within the framework of Lagrangian mechanics, formulated in terms of the potential energies stored in the SCR. 
The SCR is assumed to undergo bending and twisting deformations and to be driven by generalized actuation sources, which can represent either distributed or localized internal/external torques. 
Accordingly, the total potential energy is composed of elastic strains and actuation-induced moments, decomposed into the elastic $(\mc{E}_e)$ and actuation $(\mc{E}_u)$ energies:
\begin{equation}
\label{eqn:potentialenergy}
\mc{E}(\bs\theta,\bs{u})=\mc{E}_e(\bs\theta)+\mc{E}_u(\bs\theta,\bs{u}),
\end{equation}
where $\bs{u}$ denotes the actuation input. 
For general deformations, the elastic strain at each joint can be approximated as a linear function of the joint angle $\bs\theta_i$~\cite{pittiglio2023closed,roesthuisSteeringMultisegmentContinuum2016}. 
The quadratic form of the elastic strain yields the elastic potential energy
\[
\mc{E}_e = \frac{1}{2}\sum_{i=0}^{N-1}(\bs\theta_i-\bar{\bs\theta}_i)^\top \bs\Lambda_i \bs(\bs\theta_i-\bar{\bs\theta}_i)=\frac{1}{2}(\bs\theta-\bar{\bs\theta})^\top\bs\Lambda(\bs\theta-\bar{\bs\theta}) 
\]
where $\bs\Lambda_i=\operatorname{diag}\left(\tfrac{2G_iI_i}{\ell_i}, \tfrac{E_iI_i}{\ell_i}, \tfrac{E_iI_i}{\ell_i} \right)$, with $G_i$, $E_i$, and $I_i$ denoting the shear modulus, Young’s modulus, and cross-sectional inertia of the $i$-th joint, respectively. The block-diagonal matrix $\bs\Lambda=\operatorname{blkdiag}(\bs\Lambda_0,\cdots,\bs\Lambda_{N-1})$ is referred to as the stiffness matrix. For isotropic material, $G_i=\tfrac{E_i}{2(1+\rho_i)}$, with $\rho_i$ being the Poisson’s ratio. The actuation potential energy $\mc{E}_u(\bs\theta,\bs{u})$ is left in a general form, representing the work done by internal or external actuators. Its gradient with respect to $\bs\theta$ produces the generalized actuation torques $\nabla_{\bs\theta}\mc{E}_u(\bs\theta,\bs{u})$. 

By Lagrangian formulation, neglecting inertial terms in the quasi-static regime, the equilibrium configuration is determined by the equilibrium condition:
\begin{equation}
\label{eqn:lagrangeeq}
\nabla_{\bs\theta}\mc{E}_e(\bs\theta)+\nabla_{\bs\theta}\mc{E}_u(\bs\theta,\bs{u})=\bs\tau(\bs\theta),
\end{equation}
where $\bs\tau(\bs\theta)$ denotes additional generalized torques such as gravity or contact forces.

\subsection{Kinematic Performances} We next introduce the definition of kinematic performance of SCRs with coordinate-invariance. 
We begin by recalling that a Riemannian manifold is a smooth manifold equipped with an inner product on each tangent space that varies smoothly with the base point. 
The associated volume form provides a canonical measure for integration. 
Let $\mathbb{T}\subset\mathbb{R}^n$ be the manifold and use the Euclidean metric, then the induced volume form reduces to the standard Lebesgue measure $\Omega_{\mathbb{T}} = \mrd\bs\theta$. 

Let ${}^{s}_{b}\mb{H}:\mathbb{T}\to SE(3)$ be the forward kinematics of the SCR and $\mb{J}_{\bs\theta}$ its geometric Jacobian. Equipping $SE(3)$ with the Euclidean metric induces the pullback metric \((\mb{J}_{\bs\theta}^{\top}\mb{J}_{\bs\theta})\), whose nonzero eigenvalues $\{\lambda_i(\mb{J}_{\bs\theta}^{\top}\mb{J}_{\bs\theta})\}$ characterize the local distortion of ${}^{s}_{b}\mb{H}$.
Any symmetric function of these eigenvalues defines a spectral functional $z(\lambda_1,\ldots,\lambda_m)$, which serves as a general mathematical form of a kinematic performance measure.
A local performance density can then be defined in general as any symmetric spectral functional \( z({}^{s}_{b}\mb{H}(\bs\theta))\triangleq z(\lambda_1(\bs\theta),\ldots,\lambda_m(\bs\theta)), \) with typical choices including 
\begin{equation}
\label{eqn:kinematicindices}
z({}^{s}_{b}\mb{H}(\bs\theta))=\left\{\begin{aligned} &z_{\mathrm{vol}}(\bs\theta)=\sqrt{\det(\mb{J}_{\bs\theta}\mb{J}_{\bs\theta}^\top)},\\ &z_{\mathrm{dis}}(\bs\theta)=\tfrac{1}{2}\operatorname{tr}(\mb{J}_{\bs\theta}^\top\mb{J}_{\bs\theta}),\\ &z_{\kappa}(\bs\theta)=\tfrac{\sigma_{\max}(\mb{J}_{\bs\theta})}{\sigma_{\min}(\mb{J}_{\bs\theta})}, \end{aligned}\right.     
\end{equation} 
corresponding respectively to volume gain, distortion energy, and condition-number-based dexterity \cite{yoshikawa1985manipulability,parkKinematicDexterityRobotic1994,salisbury1982articulated}. These local measures are extended to global indices by integration with respect to $\Omega_{\mathbb{T}}$ as \(\mc{Z} \triangleq \int_{\mathbb{T}} z({}^{s}_{b}\mb{H}(\bs\theta))\ \Omega_{\mathbb{T}}. 
\)
However, since feasible configurations of the SCR are constrained by the equilibrium condition \eqref{eqn:lagrangeeq}, the effective integration domain is the solution set 
\(\mathbb{T}^*=\{\bs\theta\in\mathbb{T}\mid\nabla_{\bs\theta}\mc{E}(\bs\theta,\bs{u})=\bs\tau(\bs\theta)\}\). 
If the constraint has constant full rank with respect to $\bs\theta$, $\mathbb{T}^*$ is a smooth submanifold of $\mathbb{T}$, and the induced Riemannian volume form can be used.
More generally, including neighborhoods of singularities or branch points, the equilibrium set $\mathbb{T}^*(\bs{u})$ is countably rectifiable with intrinsic dimension $\dim(\mathbb{T}^*(\bs{u}))$, hence it carries a well-defined Hausdorff measure $\mc{H}^{\dim(\mathbb{T}^*)}$ and supports geometric integration. 
Consequently, the constrained global performance can be written as an iterated integral over the actuation parameter space,
\[
\mc{Z}
=\int_{\mathbb{U}}
\bigg(\int_{\mathbb{T}^*}
z\!\left({}^{s}_{b}\mb{H}(\bs\theta)\right)\,
\mrd \mc{H}^{\dim(\mathbb{T}^*)}(\bs\theta)\bigg)\,
\Omega_{\mathbb{U}},
\]
where $\Omega_{\mathbb{U}}$ denotes the volume measure on the admissible actuation domain $\mathbb{U}$.

\section{Structural Optimization Methodology}
\label{sec:III}
\subsection{Problem Formulation}

Let $\mb{a}\in\mathbb{A}$ denote the vector of structural design parameters, such as segment lengths, joint stiffness blocks, or orientations of embedded units. 
The admissible set $\mathbb{A}$ is specified by manufacturing limits and safety considerations. 
For each candidate design $\mb{a}$, the forward kinematics ${}^{s}_{b}\mb{H}(\bs\theta;\mb{a})$ and the corresponding Jacobian $\mb{J}_{\bs\theta}(\bs\theta;\mb{a})$ are uniquely determined, and hence the local performance density $z$ and its global counterpart $\mc{Z}(\mb{a})$ can be evaluated using \eqref{eqn:kinematicindices}.

The structural optimization problem is then formulated as
\begin{equation}
\label{eqn:structopt}
    \begin{aligned}
        \min_{\mb{a}} \ & \mc{Z}(\mb{a})
        \;\triangleq\;
        \int_{\mathbb{T}}
        z({}^{s}_{b}\mb{H}(\bs\theta;\mb{a}))\,\Omega_{\mathbb{T}} \\
        \text{s.t.}\ &
        \begin{cases}
        \nabla_{\bs\theta}\mc{E}(\bs\theta,\bs{u};\mb{a})=\bs\tau(\bs\theta;\mb{a}),
        \quad \bs{u}\in\mathbb{U},\\
        \mc{A}(\mb{a})\leq 0,
        \end{cases}
    \end{aligned}
\end{equation}
where $\mc{A}(\mb{a})\leq 0$ collects algebraic feasibility constraints. Consequently, the global index associated with the structural parameter $(\mb{a})$ can be equivalently written as an iterated integral over the actuation domain and the corresponding solution manifold:
\begin{equation}
    \label{eqn:optprob}
    \begin{aligned}
        \min_{\mb{a}}\mc{Z}(\mb{a})
        &=\int_{\mathbb{U}}
        \Bigg(
        \int_{\mathbb{T}^*}
        z({}^{s}_{b}\mb{H}(\bs\theta;\mb{a}))\,
        \mrd\mc{H}^{\dim(\mathbb{T}^*)}(\bs\theta)
        \Bigg)\,\Omega_{\mathbb{U}}.\\
        \text{s.t.}&\quad\mc{A}(\mb{a})\leq 0.
    \end{aligned}
\end{equation}
In the following, we discuss two complementary strategies to solve the optimization problem: a simplification under smooth manifold assumptions and a gradient-based optimization strategy for general cases.

\subsection{Simplification under Smooth Manifold Assumption}

If the equilibrium constraint admits a smooth parametrization of joint states by the input space, i.e., $\bs\theta^*:\mathbb{U}\to\mathbb{T}$ defines a smooth immersion, then the mapping $\bs\theta^*(\bs{u})$ satisfies the implicit derivative relation:
\begin{equation}
\label{eqn:implicitrelation}
\frac{\partial \bs\theta}{\partial \bs{u}}
=-\mb{S}(\bs\theta,\bs{u};\mb{a})^{-1}\,
\mb{M}(\bs\theta,\bs{u};\mb{a}),
\end{equation}
where 
\begin{equation}
    \label{eqn:immersion}
    \mb{S}=\frac{\pt^2\mc{E}}{\pt\bs\theta\pt\bs\theta}-\frac{\pt\bs\tau}{\pt\bs\theta},\quad \mb{M}=\frac{\pt^2\mc{E}}{\pt\bs\theta\pt\bs{u}}    
\end{equation}
are the generalized stiffness matrix and characterize the coupling with the input, respectively.
The immersion property of $\bs\theta^*$ depends on the rank of $\mb{S}^{-1}\mb{M}$, the mapping is locally smooth and non-degenerate if $\mb{S}$ is nonsingular and $\mb{M}$ has full column rank.
In this case, the feasible set $\mathbb{T}^*$ is diffeomorphic to the input domain $\mathbb{U}$, and the inner integral in \eqref{eqn:structopt} reduces to an evaluation along the immersion
$\bs\theta^*(\bs{u})$. 
By applying the area formula
\cite{federer2014geometric}, one obtains

\begin{equation*}
\begin{aligned}
    \int_{\mathbb{T}^*}
z({}^{s}_{b}\mb{H}(\bs\theta;\mb{a}))\,\mrd\mc{H}^{\dim(\mathbb{T}^*)}(\bs\theta)=
z({}^{s}_{b}\mb{H}(\bs\theta^*;\mb{a}))\,\mc{J}_{\bs\theta}(\bs{u};\mb{a}),
\end{aligned}
\end{equation*}
where $\mc{J}_{\bs\theta}(\bs{u};\mb{a})$ is the Jacobian determinant of the
immersion $\bs\theta^*(\bs{u})$, given by
\[
\mc{J}_{\bs\theta}(\bs{u};\mb{a})
=\left[\det\left(\frac{\partial \bs\theta}{\partial \bs{u}}^\top\frac{\partial \bs\theta}{\partial \bs{u}}\right)\right]^{1/2}\triangleq\left[\mbox{det}\lt\mb{G}_{\bs\theta}\rt\right]^{1/2}
\]
where $\mb{G}_{\bs\theta}$ denotes the Gram matrix of $\frac{\pt\bs\theta}{\pt\bs{u}}$. Substituting back, the global optimization problem \eqref{eqn:structopt} takes the following reformulation:
\begin{equation}
\label{eqn:Zopt}
\begin{aligned}
    \min_{\mb{a}}\mc{Z}(\mb{a})
&=\int_{\mathbb{U}}
z({}^{s}_{b}\mb{H}(\bs\theta^*(\bs{u});\mb{a}))\,
\mc{J}_{\bs\theta}(\bs{u};\mb{a})\,\Omega_{\mathbb{U}}\\
\mr{s.t.}&\quad \mc{A}(\mb{a})\leq 0.
\end{aligned}
\end{equation}
If the structural feasibility condition ($\mc{A}(\mb{a})\leq 0$) represents bound constraints, it can be treated as part of the admissible design domain $\mathbb{A}$. 
Within this domain, the optimization reduces to an unconstrained problem, and the optimality is ensured by satisfying the following first-order necessary condition:
\[
\begin{aligned}
    \nabla_{\mb{a}}\mc{Z}
&=\int_{\mathbb{U}}
\bigg(z\,\nabla_{\mb{a}}\mc{J}_{\bs\theta}
+\mc{J}_{\bs\theta}\,\nabla_{\mb{a}}z\bigg)\,
\Omega_{\mathbb{U}}\\
&=\int_{\mathbb{U}}\left[\frac{1}{2}z\mc{J}_{\bs\theta}\tr\lt \mb{G}_{\bs\theta}^{-1}\nabla_{\mb{a}}\mb{G}_{\bs\theta}\rt+\mc{J}_{\bs\theta}\nabla_{\mb{a}}z\right]\ \Omega_{\mathbb{U}}\\
    &=\mb{0},
\end{aligned}
\]
which, under regularity, yields the strong-form pointwise condition
\begin{equation}
\label{eqn:optimitycondition}
\bigg(\frac{1}{2}z\,\tr(\mb{G}_{\bs\theta}^{-1}\nabla_{\mb{a}}\mb{G}_{\bs\theta})
+\nabla_{\mb{a}}z\bigg)\mc{J}_{\bs\theta}=\mb{0},
\quad \forall \bs{u}\in\mathbb{U},
\end{equation}
where $\mb{G}_{\bs\theta}$ is the Gram matrix of $\tfrac{\partial\bs\theta}{\partial\bs{u}}$.
Degeneracy occurs when $\mc{J}_{\bs\theta}=0$, corresponding to collapsed input directions where actuation can not produce independent generalized torques.
If $z$ is a non-zero kinematic performance index, one can further deduce the non-trivial solution of \eqref{eqn:optimitycondition} that
\begin{equation}
\label{eqn:gradient}
\tr(\mb{G}_{\bs\theta}^{-1}\nabla_{\mb{a}}\mb{G}_{\bs\theta})=\sum_{i}\frac{\nabla_{\mb{a}}\lambda_i(\mb{G}_{\bs\theta})}{\lambda_i(\mb{G}_{\bs\theta})}=-2\frac{\nabla_{\mb{a}}z}{z}.
\end{equation}
Geometrically, this condition implies that optimizing the local performance requires structurally modulating the spectrum of the configuration space metric to counteract its distortion.

% When structural constraints $\mc{A}(\mb{a})\leq 0$ are enforced, the above stationarity condition is replaced by the Karush–Kuhn–Tucker system: \[
% \nabla_{\mb{a}}\mc{Z}(\mb{a}^*) + \nabla_{\mb{a}}\mc{A}(\mb{a}^*)^\top \bs\mu = \mb{0},
% \]

\subsection{Gradient-based Optimization for General Cases}

The smooth manifold assumption may fail in practice when the generalized stiffness matrix $(\mb{S})$ becomes singular or ill-conditioned. 
In such cases, the implicit relation \eqref{eqn:implicitrelation} is nolonger to hold, and the equilibrium mapping $\bs\theta^*$ cannot be obtained analytically.
To address this difficulty, we reformulate the optimization as a nested energy-minimization problem that unifies both smooth and singular cases within a single computational framework.

\subsubsection{Inner-level equilibrium computation}

For each input $\bs{u}$ and structural parameter $\mb{a}$, the equilibrium configuration is computed as the stationary point of the total potential energy:
\begin{equation*}
    \bs\theta^*(\bs{u};\mb{a})=\arg\min_{\bs\theta}\lt\mc{E}(\bs\theta,\bs{u};\mb{a})-\int\bs\tau\mrd\bs\theta\rt.
\end{equation*}
The minimization is solved iteratively using a trust-region scheme 
\begin{equation}
    \label{eqn:trust-region-update}(\mb{S}+\mu\mb{I})\Delta\bs\theta=\bs\tau(\bs\theta)-\nabla_{\bs\theta}\mc{E},\quad \bs\theta\leftarrow\bs\theta+\Delta\bs\theta.
\end{equation}
The damping factor $\mu>0$ is adaptively updated based on the ratio of the actual to the predicted energy reductions.
During the actuation, the solution for each $\bs{u}$ is warm-started from the previous equilibrium, enabling smooth continuation across potential branch points.

\subsubsection{Outer-level structural update}
Once the inner equilibrium mapping is obtained, the global performance index is evaluated numerically by quadrature.
The structural parameters are then updated following a projected gradient-descent rule: 
\[
\mb{a}_{k+1}=\mc{P}_{\mathbb{A}_c}(\mb{a}_{k}-\eta\nabla_{\mb{a}}\mc{Z}(\mb{a}_k)),\quad\mathbb{A}_c=\{\mb{a}\vert\mc{A}(\mb{a})\leq0,\mb{a}\in\mathbb{A}\}.
\]
where $\mc{P}_{\mathbb{A}_c}(\cdot)$ denotes the projection onto the admissible design domain and $\eta>0$ is a step size ensuring monotonic decrease of $\mc{Z}$.
The gradient $(\nabla_{\mb{a}}\mc{Z})$ can be approximated by finite differences or obtained via automatic differentiation through the inner minimization loop. Consequently, the complete computational process is summarized in Algorithm \ref{alg:two_level_opt}.

\begin{algorithm}[t]
{\small
\caption{Two-level Structural Optimization Framework}
\label{alg:two_level_opt}
\begin{algorithmic}[1]
\Require{Initial structural parameters $\mb{a}_0$, admissible domain $\mathbb{A}_c$, actuation set $\mathbb{U}$,
initial configurations $\bs\theta_0(\bs{u})$, damping factor $\mu_0>0$.}
\Ensure{Optimized structural parameters $\mb{a}^*$.}
\For{$k=0,1,\ldots$ until convergence}
    \For{each actuation input $\bs{u}_j\in\mathbb{U}$}
        \State Initialize $\bs\theta\leftarrow\bs\theta_0(\bs{u}_j)$, $\lambda\leftarrow\lambda_0$.
        \Repeat
            \State Compute gradient $\nabla_{\bs\theta}\mc{E}(\bs\theta,\bs{u}_j;\mb{a}_k)$
            \State Compute the stiffness matrix $\mb{S}$.
            \State Solve and update the trust-region step \eqref{eqn:trust-region-update}.
        \Until{convergence of $\mc{E}$}
        \State Store equilibrium $\bs\theta^{*}(\bs{u}_j;\mb{a}_k)\leftarrow\bs\theta$.
    \EndFor
    \State Evaluate the global performance index $\mc{Z}(\mb{a}_k)$.
    \State Update structural parameters by projected gradient descent:
           \(
           \mb{a}_{k+1}=
           \mc{P}_{\mc{A}_c}\!\big(\mb{a}_k-\eta\,\nabla_{\mb{a}}\mc{Z}(\mb{a}_k)\big).
           \)
\EndFor
\State \Return $\mb{a}^*\leftarrow\mb{a}_k$.
\end{algorithmic}
}
\end{algorithm}

% \subsection{Comparison to Other Optimization Framework}

\section{Application of the Optimization Framework to Magnetic Actuation}
\label{sec:IV}
To demonstrate the generality and practical relevance of the proposed optimization framework, it is applied here to MeSCRs, whose actuation arises from the interaction between internal permanent magnets and external magnetic fields.
% In such systems, the magnetic torque and force distribution couples strongly with the elastic deformation of the soft body, leading to complex, configuration-dependent equilibria.

\subsection{Magnetic Actuation}
We formulate the magnetic potential energy and its differential properties with respect to the robot configuration.
Each embedded magnet is treated as an independent joint, with additional joints added on either side to assist the description. The joint indices of all $N_m$ micromagnets constitute an index set $\mathbb{K}\subset\mathbb{Z}$.
Given that the robot operates under a constant temperature, the formulation aligns with the standard Helmholtz free energy without thermal variations \cite{linMagneticContinuumRobot2021}.
% Additional terms with thermal effects can be found in \cite{richterConcentricTubeInspiredMagnetic2023}. 
In this case, the magnetic potential energy of the MeSCR is given by
\begin{equation*}
    \mc{E}_m = -\sum_{k\in\mathbb{K}} \mb{m}_k^\top\mb{b}(\mb{p}_k)
\end{equation*}
where $\mb{m}_k=\mb{R}_{0}^{k}\bar{\mb{m}}_k\in\mathbb{R}^3$ is the magnetic dipole moments of the $k$th magnet with modulus $M_k=\|\bar{\mb{m}}_k\|$ and $\mb{b}(\mb{p}_k)\in\mathbb{R}^3$ denote the external magnetic field with modulus $B_k=\|\mb{b}(\mb{p}_k)\|$. For readibility, we sometimes denote $\mb{b}(\mb{p}_k)$ in $\mb{b}_{k_i}$, in which $i=0,1,\dots,N_m-1$.

The gradient and Hessian of $\mc{E}_e$ are straightforward, whereas we focus on computing that of $\mc{E}_m$. 

\begin{equation*}
% \label{eqn:parEmpartheta}
\begin{aligned}
    \nabla_{\bs\theta_i}\mc{E}_m
    =&-\sum_{k\in\mathbb{K},k\geq i}\lt\left[\mb{m}_{k}\right]_\times^\top\mb{R}_{0}^{i}\mb{J}_{\mb{R}_r}^{\bs\theta_i}+\frac{\pt\mb{p}_k}{\pt\bs\theta_i}(\mb{m}_k^\top\nabla_{\mb{p}_i})\rt^\top\mb{b}_k\\
    \triangleq&\mb{M}_i(\bs\theta)\mb{b}
\end{aligned}
\end{equation*}
where $\mb{J}_{\mb{R}_r}^{\bs\theta_i}$ represents the right Jacobian of $\mr{SO}(3)$ manifold associated with $\bs\theta_i$ \cite{chirikjianStochasticModelsInformation2011}. We reorganize the expression by grouping the multiplicative terms into matrix $\mb{M}_i(\bs\theta)\in\mathbb{R}^{3\times 3N_m}$ and introducing vector $\mb{b}=\left[\mb{b}_{k_0}^\top,\dots,\mb{b}_{k_{N_m-1}}^\top\right]^\top\in\mathbb{B}\subset\mathbb{R}^{3N_m}$ to obtain a compact form. 
For precise calculations, $\frac{\pt\mb{p}_k}{\pt\bs\theta_i}$ can be derived from \eqref{eqn:bodyJacobian}.
As a result, the equilibrium condition \eqref{eqn:lagrangeeq} has the following explicit representation:
\begin{equation}
    \label{eqn:e_LagrangianEquation}
    \bs\Lambda(\bs\theta-\bar{\bs\theta})+\bs\tau(\bs\theta)=\mb{M}(\bs\theta)\mb{b}
\end{equation}
where $\mb{M}(\bs\theta)=\left[\mb{M}_0^\top,\dots,\mb{M}_{N-1}^\top\right]^\top\in\mathbb{R}^{3N\times 3N_m}$. 
By developing \eqref{eqn:e_LagrangianEquation} and performing some rearrangements, the joint space compliance relation between the variation of the actuating magnetic field $(\delta\mb{b})$ and the configuration variation $(\delta\bs\theta)$ is obtained:
\begin{equation*}
    \lt\bs\Lambda+\frac{\pt^2\mc{E}_m}{\pt\bs\theta\pt\bs\theta^\top}+\frac{\pt\bs\tau}{\pt\bs\theta}\rt\delta\bs\theta\triangleq \mb{S}\delta\bs\theta=\mb{M}(\bs\theta)\delta\mb{b}
\end{equation*}
where $\mb{S}\in\mathbb{R}^{3N\times 3N}$ is known as the Hessian matrix of stiffness.  
Observed that, the magnetic actuation of the MeSCR follows the canonical equilibrium structure of \eqref{eqn:immersion}, where variations in the magnetic field $\mb{b}$ correspond to the control inputs $\bs{u}$ that drive configuration changes.

% where the Hessian of $\mc{E}_m$ (denoted in $\mb{S}_m$) is given in the supplementary material and .

\subsection{Properties of Magnetic Actuation under Uniform Fields}
% 本小节其实是工具章节，视情况而定做出添加或者修改
We further analyze the properties of the actuation mapping in a uniform magnetic field, where the field can be expressed as \(\mb{b}=\mb{U}_u\mb{b}_u\) with $\mb{U}_u=\mb{1}_{N_m}\otimes\mb{I}_3$ and $\mb{b}_u\in\mathbb{R}^3$.
% Such analysis provides the regularity conditions required for the equilibrium mapping $\bs\theta^\ast(\mb{b})$ to remain smooth and well-defined.
We begin by establishing two fundamental properties of the magnetic actuation mapping:
\begin{proposition}
    \label{pro:1}
    The matrix-valued function $\mb{M}(\bs\theta)$ and the magnetic torque $\mb{M}(\bs\theta)\mb{b}$ are bounded on $\mathbb{R}^{3N}$ with positive constants $\mc{M}_0$ and $\mc{M}$ such that $\|\mb{M}(\bs\theta)\|\leq\mc{M}_0$ and $\|\mb{M}(\bs\theta)\mb{b}\|\leq\mc{M}$ for all $\bs\theta\in\mathbb{R}^{3N}$, respectively.
\end{proposition}
\begin{proposition}
    \label{pro:2}
    The vector-valued function $\mb{M}(\bs\theta)\mb{b}$ is Lipschitz continuous on $\mathbb{R}^{3N}$ with a Lipschitz constant $\mc{L}$ such that
    $\|\mb{M}(\bs\theta)\mb{b}-\mb{M}(\bs\varphi)\mb{b}\|\leq\mc{L}\|\bs\theta-\bs\varphi\|$ for all $\bs\theta,\bs\varphi\in\mathbb{R}^{3N}$.
\end{proposition}
\begin{proof}
    % The proofs are deferred to Appendix \ref{append:proof:props}.
    The proof is deferred to supplementary materials.
\end{proof}
Assuming the robot to be naturally straight $(\bar{\bs\theta}=\mb0)$, and $\bs{\tau}(\bs\theta)$ is ruled out, \eqref{eqn:e_LagrangianEquation} further simplifies to the following fixed-point equation:
\begin{equation}
    \label{eqn:stdcontract}
    \bs\theta=\bs\Lambda^{-1}\mb{M}(\bs\theta)\mb{b}
\end{equation}
Under this formulation, the existence and uniqueness of the equilibrium solution can be guaranteed by ensuring that the mapping is a contraction in the configuration space:
\begin{theorem}
\label{thm:solunique} 
There exists a unique equilibrium solution to \eqref{eqn:stdcontract} if the following inequality holds:
    \begin{equation}
    \label{eqn:thm1cond}
    \max_{k\in\mathbb{K}} B_k<\frac{\pi}{4}\frac{\lambda_{\min}\lt\bs\Lambda\rt}{\sum_{k\in\mathbb{K}}kM_k}
\end{equation}
where $\lambda_{\min}(\cdot)$ denotes the smallest eigenvalue of a matrix.
\end{theorem}
\begin{proof}
\label{prf:1}
Based on the Proposition \ref{pro:2}, the Lipschitz constant of the continuous mapping can be estimated by
\begin{equation*}
    \left\|\bs\Lambda^{-1}\frac{\pt}{\pt\bs\theta}\mb{M}(\bs\theta)\mb{b}\right\|=\|\bs\Lambda^{-1}\mb{S}_m\|\leq\|\bs\Lambda^{-1}\|\mc{L}=\lambda_{\min}^{-1}(\bs\Lambda)\mc{L}.
\end{equation*}
From \eqref{eqn:thm1cond}, it follows that $\lambda_{\min}^{-1}(\bs\Lambda)\mc{L}<1$. Then, $\bs\Lambda^{-1}\mb{M}(\bs\theta)\mb{b}$ is a contraction mapping on $\mathbb{R}^{3N}$. By Banach’s fixed-point theorem, there exists a unique equilibrium point to \eqref{eqn:stdcontract}.
\end{proof}
The inequality provides a sufficient condition to guarantee the uniqueness of a globally attractive equilibrium point for any initial condition. 
Notably, the upper bound is solely determined by the material properties and is independent of the joint number \(N\) (note that $N\bar{\ell}_i\approx L$). 
The limitation on the external magnetic field strength can be explained by the critical buckling instability of the magneto-elastic material \cite{lu2023mechanics}. We found it intriguing that when each joint is magnetized, the resulting upper bound exhibits a striking structural resemblance to that derived in previous work \cite{Wu2024Closed} on magnetic continua, differing only slightly in the coefficients. 
% This similarity highlights the underlying consistency in the physical principles governing continuum and discreteness. 
Furthermore, Theorem \ref{thm:solunique} also ensures \(\mathbf{S}\) to be invertible. By applying Weyl's inequality, one can easily verify that \(\lambda_{\min}(\mathbf{S}) \geq \lambda_{\min}(\boldsymbol{\Lambda}) - \mathcal{L} > 0\).

We subsequently define the controllable degrees of freedom (DoF) as the independently actuated direction of the task-space twist, which corresponds to the row rank of the actuation Jacobian $(\mb{J}_\mb{b}=\mb{J}_{\bs\theta}\frac{\pt\bs\theta}{\pt\mb{b}})$. 
The following theorem reveals the relation between it and the configuration of embedded magnets:
\begin{theorem}
    \label{thm:DoF}
    The controllable DoF of the MeSCR does not exceed twice the number of embedded magnets.
\end{theorem}
\begin{proof}
    It is deferred to the supplementary materials.
\end{proof}

The controllable DoF is ultimately constrained by the number and independence of the actuating fields. 
As $N_m$ increases, achieving fully actuated or redundant control would require an equal number of independently controllable fields, which is generally infeasible in practice. 
Under the spatially uniform condition on fields, the maximum controllable DoF does not exceed three. 
For simplicity and manufacturability, all embedded magnets are assumed to share identical material and geometric properties.

In most commercial designs, embedded magnets are placed along the catheter’s axial direction. 
For $N_m$ magnets, this yields $2^{N_m}$ orientation permutations.
However, because $[\pm\mathbf{m}_k]_\times\mathbf{b}_k = [\mathbf{m}_k]_\times(\pm\mathbf{b}_k)$, flipping a magnet’s polarity can be equivalently compensated by reversing the field direction. 
Thus, axial alignment effectively reduces the optimization space while restricting the MeSCR to a twist-free deformation mode due to the isotropic and axially symmetric nature of its soft matrix.

\begin{theorem}[Material-twist-free]
\label{thm:twistfree}
The MeSCR is material-twist-free ($\bs\theta_i^\top\bar{\mb{t}}_i=0$) for any actuating magnetic field, provided that the embedded magnets' magnetic moments are aligned axially, i.e., \(\bar{\mb{m}}_k=\pm M_k\bar{\mb{t}}\), and the magnetic field strength satisfies:
% \begin{equation}
% \label{eqn:thmcond2}
$\max_{k\in\mathbb{K}} B_k \leq \sqrt6 \lambda_{\min}(\bs\Lambda)/\sum_{k\in\mathbb{K}} M_k.$
% \end{equation}
\end{theorem}
\begin{proof}
See supplementary material for details.
% ~\ref{append:lemmaandtheorem}.
% , consists of three steps: 
% 1. Deriving the equilibrium equations for the boundary joints.
% 2. Proving the boundary joints remain twist-free under the inequality condition.
% 3. Extending this result recursively to show that all joints exhibit twist-free motion.
\end{proof}

Noticing that \(\frac{\sqrt6\lambda_{\min}(\bs\Lambda)}{\sum_{k\in\mathbb{K}}M_k} \propto N^2\), as \(N \to \infty\), Theorem \ref{thm:twistfree} holds for any actuating magnetic field. This upper bound characterizes the ability of the discrete model with \(N\) joints to approximate the continuum. 
We found an alternative form of the equilibrium equation \eqref{eqn:stdcontract} from the underlying proof:
\begin{equation}
    \label{eqn:newiteration}
    \bs\theta_i^*=\frac{\sum_{k\geq i}\mb{v}_k}{\lambda_{\max}(\bs\Lambda_i)}=\frac{\sum_{k\geq i}[\mb{R}_{0}^{k^\top}(\bs\theta^*)\mb{b}_k]_\times^\top\bar{\mb{m}}_k}{\lambda_{\max}(\bs\Lambda_i)}.
\end{equation}
% For any initial condition satisfying the assumptions of Theorem~\ref{thm:initcond}, the associated equilibrium state is guaranteed to exhibit zero torsional components. 
The configuration space \eqref{eqn:c-space} thus can be degenerated to a simplified model as presented in \cite{greigarnPseudorigidbodyModelKinematic2015}. Moreover, the joint stiffness matrix reduces to $\bs\Lambda_i=\Lambda_i\mb{I}_3$ with $\Lambda_i=\frac{E_iI_i}{\ell_i}$, and we denote $\lambda_{\min}(\bs\Lambda)=\Lambda_{\min}$. We emphasize its distinction from geometric-twist-free, which requires $\bs\theta_i^\top\mb{t}_i=0$. It should be known that geometric torsion may still exist under the material-twist-free condition. 
However, if the external magnetic field is coplanar, the following corollary holds:
\begin{corollary}
    \label{cor:1}
    Define the plane $\Pi=\operatorname{span}\{\bar{\mb{t}},\mb{s}\}\subset\mathbb{R}^3$, where $\mb{s}\in\mathbb{R}^3$ is linearly independent of $\bar{\mb{t}}$. Suppose $\forall k\in\mathbb{K}$, $\bar{\mb{m}}_k=\pm M_k\bar{\mb{t}}$ and $\mb{b}_k\in\Pi$. Then there exists an equilibrium configuration $\bs\theta^*\in\mathbb{R}^{3N}$ such that the centerline of the MeSCR lies entirely within the plane $\Pi$.
\end{corollary}

% \begin{corollary}
%     \label{cor:1}
%     If $\forall k\in\mathbb{K}, \bar{\mb{m}}_k=\pm M_k\bar{\mb{t}},\mb{b}_k=B_k\mb{b}_u$, then there exists an equilibrium configuration in which the centerline lies in the plane $\operatorname{span}\{\bar{\mb{t}},\mb{b}_u\}$, with the normal vector $\mb{n}=\frac{\bar{\mb{t}}\times\mb{b}_u}{\|\bar{\mb{t}}\times\mb{b}_u\|}$.
% \end{corollary}
\begin{proof}
    The proof is constructive. Let $\bs\theta_i^*=\theta_i^*\mb{n}$, where $\mb{n}=\frac{\bar{\mb{t}}\times\mb{s}}{\|\bar{\mb{t}}\times\mb{s}\|}$ is the unit vector orthogonal to $\Pi$. Since $\mb{b}_k\in\Pi$, we notice that \(\mb{R}_{0}^{k^\top}(\bs\theta^*)\mb{b}_k=\mb{R}(-\sum_i\theta_i^*\mb{n})\mb{b}_k\in\Pi\). Then $\forall k\in\mathbb{K}$, $[\bar{\mb{m}}_k]_\times\mb{R}_0^{k^\top}\mb{b}_k\!\in\operatorname{span}\{\mb{n}\}$. Hence $\bs\theta_i^*=\theta_i^*\mb{n}$ is a solution to $\eqref{eqn:newiteration}$. The corresponding tangent vectors  $\mb{t}_i=\mb{R}_0^i\bar{\mb{t}}_i\in\Pi$, thus the centerline lies entierly within the plane $\Pi$.
\end{proof}
Specially, if the initial value is chosen as $\bs\theta(0)=\mb{0}$, then the gradient-based iteration evolves within the subspace $\operatorname{span}\{\mb{n}\}$. 
The optimization trajectory converges to a planar equilibrium configuration satisfying $\bs\theta_i=\theta_i\mb{n}$. 
We can further derive a closed-form expression under the small-angle premise, wherein the cumulative rotation matrix admits the first-order expansion:
$
\mb{R}\lt\sum_{i=0}^k\theta_i\mb{n}\rt\approx\mb{I}_3+\sum_{i=0}^k\theta_i[\mb{n}]_\times
$. Substituting this into the total potential energy yields
\begin{equation*}
    \mc{E}\approx\sum_{i=0}^{N-1}\frac12\Lambda_i\theta_i^2+\sum_{i=0}^{N-1}\theta_i\!\lt\sum_{k\in\mathbb{K},k\geq i}\bar{\mb{m}}_k^\top[\mb{n}]_\times\mb{b}_k\rt-\sum_k\bar{\mb{m}}_k^\top\mb{b}_k.
\end{equation*}
The optimization problem $\min_{\bs\theta}\mc{E}$ thus has the explicit solution
\begin{equation}
    \label{eqn:explicitsolution}
    \theta_i=\frac{1}{\Lambda_i}\sum_{k\in\mathbb{K},k\geq i}\bar{\mb{m}}_k^\top[\mb{n}]_{\times}^\top\mb{b}_k.
\end{equation}
It can be verified that this also corresponds to the general solution of \eqref{eqn:newiteration} under the small-angle premise.
% Based on the previous assumption regarding the overall reversal of the magnetic field direction, any configuration of axially placed magnets has a corresponding symmetrical configuration, resulting in a total of $2^{N_m-1}$ possible combinations.

\subsection{Analytical Optimization under Weak Uniform Fields}

If the external magnetic field remains unchanged, the deformation of MeSCRs is determined by the placement (position) and orientation of embedded magnets. 
For instance, embedded magnets with variable positions have been employed to expand the MeSCR's workspace \cite{parkWorkspaceExpansionMagnetic2024}. 
In the PRB model, the length of embedded magnets is defined by the Voronoi length as $L_k\triangleq\sum_{i\leq k}\ell_i, \forall k\in\mathbb{K}$. 
From \eqref{eqn:stdcontract}, it follows that the flexible segment beyond the last embedded magnet is unactuated. 
It does not affect the workspace volume of the MeSCR, nor any performance metric defined using a left-invariant Riemannian metric. 
Therefore, we let one of the magnets be fixed at the distal end $(L_{k_{N_m-1}}=L)$, leaving $N_m-1$ decision variables on placements. 
Additionally, we let the flexible segments between adjacent magnets have minimal gaps $(L_g)$ to mitigate magnetic coupling.
The orientation of embedded magnets $(\bar{\mb{m}}_k\in\mathbb{S}^2)$ is also considered as the design variable.

We collect the design variables of magnet placements and orientations into the parameter vector $\mb{a}\in\mathbb{A}$. 
Under a spatially uniform magnetic field $\mb{b}\in\mathbb{B}$ serving as the actuation source, the structural optimization of the MeSCR follows the general form of~\eqref{eqn:optprob}, where the equilibrium mapping $\bs\theta^\ast(\mb{b};\mb{a})$ is implicitly defined by the fixed-point relation in~\eqref{eqn:stdcontract}. 
Suppose the magnetic field $\mb{b}$ satisfies the condition in Theorem~\ref{thm:solunique}. 
The manifold $\mathbb{B}$ then constitutes an affine subspace diffeomorphic to $\mathbb{R}^3$. 
By Theorem~\ref{thm:solunique}, the stiffness matrix $\mb{S}$ is nonsingular in a neighborhood of each $\mb{b}_u\in\mathbb{B}$, ensuring that the equilibrium mapping is differentiable with respect to $\mb{b}$. 
Furthermore, by Theorem~\ref{thm:DoF}, the Jacobian $\frac{\partial\bs\theta}{\partial\mb{b}}=\mb{S}^{-1}\mb{M}$ satisfies $\operatorname{rk}(\mb{S}^{-1}\mb{M})=2N_m$, implying that $\frac{\partial\bs\theta}{\partial\mb{b}_u}=\mb{S}^{-1}\mb{M}\mb{U}_u$ is of full rank when $N_m\ge2$. 
Consequently, the immersion $\bs\theta^\ast:\mathbb{B}\to\mathbb{T}$ is a smooth mapping, and the structural optimization of the MeSCR can be expressed in the analytical integral form of~\eqref{eqn:Zopt} derived under the smooth manifold assumption.

% 这里要补充我们把长度和朝向都作为带设计的参数作为决策变量，然后重新定义目标函数
% $\bar{\mb{m}}_{k_i}=\cos\varphi\bar{\mb{t}}+$
% $\mb{a}=[\bar{\mb{m}}_{k_0}^\top,L_{k_0},\dots,L_{k_{N_m-2}},\bar{\mb{m}}_{k_{N_m-1}}^\top\in\mathbb{R}$
% 并且这里出现了J_{\theta}表明我们已经默认是满足定理1的结果，当然前面也提到了。
Since $\mc{J}_{\bs\theta}$ contains complex Hessian terms, we first derive a tractable approximation to facilitate further analysis.
\begin{lemma}
\label{lmm:approximmersionJacobian}
Suppose the inequality condition in Theorem~\ref{thm:solunique} holds. Then the immersion Jacobian admits the expansion
\begin{equation}
\label{eqn:expansionofJ}
% \begin{aligned}
\mc{J}_{\bs\theta}^2 = \det\left(\mb{U}_u^\top\mb{M}^{\top} \bs\Lambda^{-2}\mb{M}\mb{U}_u \right)+\epsilon\triangleq\tilde{\mc{J}}_{\bs\theta}^2 + \epsilon
% \end{aligned}
\end{equation}
where the remainder term $\epsilon$ is bounded above as
\[
|\epsilon| \leq \frac{3(2\mc{L} - \mc{L}^2)}{(1 - \mc{L})^2} \frac{N_m^3 \mc{M}_0^6}{\Lambda_{\min}^6},
\]
and satisfies $\lim_{B_k \to 0} |\epsilon| \to 0$.
\end{lemma}
\begin{proof}
It is deferred to the supplementary material.
\end{proof}
It shows that $\tilde{\mc{J}}_{\bs\theta}$ provides a close approximation of $\mc{J}_{\bs\theta}$ in the weak-field regime, with the residual term converging to zero as the magnetic field strength decreases. 
We can now turn to analyze the approximation of the objective $\tilde{\mc{Z}}(\mb{a}) \triangleq\int_{\mathbb{B}}z\  \tilde{\mc{J}}_{\bs\theta}\ \Omega_{\mathbb{B}}$. 
% 这里使用了S15
Let $\mb{M}(\bs\theta)$ in (S15) be represented as $\mb{M}=\mb{D}_{k_m-1}^\top\mb{U}_m\mb{D}_m$, we notice
\begin{equation*}
\begin{aligned}
    \mb{U}_m^\top\mb{D}_{k_{N_m-1}}\bs\Lambda^{-2}\mb{D}_{k_{N_m-1}}^\top\mb{U}_m&=\mb{U}_m^\top\bs\Lambda^{-2}\mb{U}_m=\bs\Lambda_m\!\otimes\!\mb{I}_3
\end{aligned}
\end{equation*}
where
\begin{equation*}
    \boldsymbol\Lambda_m=\begin{bmatrix}
\Lambda^m_{k_0},&\Lambda^m_{k_0},&\dots&\Lambda^m_{k_0}\\
\Lambda^m_{k_0},&\Lambda^m_{k_1},&\dots&\Lambda^m_{k_1}\\
\vdots&\vdots&\ddots&\vdots\\
\Lambda^m_{k_0},&\Lambda^m_{k_1},&\dots&\Lambda^m_{k_{N_m-1}}
\end{bmatrix}\in\mathbb{R}^{N_m\times N_m}
\end{equation*}
with
$\Lambda_{k_i}^m=\sum_{j=0}^{k_i}\frac{\ell_j^2}{E_j^2I_j^2}\approx\frac{1}{k_0E^2I^2}\sum_{j=0}^{i}(L_{k_j} - L_{k_{j-1}})^2$ as a function of $L_k$. 
Lemma~\ref{lmm:approximmersionJacobian} also derives the approximation:
\begin{equation}
\label{eqn:approxiGram}
\begin{aligned}
    \mb{G}_{\bs\theta}&\approx\mb{U}_u^\top\mb{D}_m^\top\lt\bs\Lambda_m\otimes\mb{I}_3\rt\mb{D}_m\mb{U}_u\\
    &=\sum_{ij}\Lambda_{ij}^m\lt\mb{m}_{k_i}\mb{m}_{k_j}^\top-\lt\mb{m}_{k_i}^\top\mb{m}_{k_j}\rt\mb{I}_3\rt.
\end{aligned}
\end{equation}
The following lemma shows the relation between the singularity of $\tilde{\mc{J}}_{\bs\theta}$ and the configuration of $L_k$:
\begin{lemma}
\label{lmm:singularity}
Suppose the embedded magnets are placed at the two boundaries (proximal and distal ends), that is $\forall 0\leq N_k<N_m-1$, $L_{k_{N_k}} = 0$ and $L_{k_{N_k+1}} = L$. Then $\tilde{\mc{J}}_{\bs\theta} = 0$.
\end{lemma}
\begin{proof}
    It is deferred to the supplementary material.
\end{proof}

% 1. 讨论与定理4之间的联系，即当磁铁分布在左右两侧时，等效个数等效减少至1，随之结论是$M(\bs\theta)$降秩，导致$\pt\bs\theta/\pt\mb{b}_u$奇异。

% \begin{proof}
% Under the assumed boundary placement, it is simplified that
% \(\bs\Lambda_m \otimes \mb{I}_3 = \Lambda_{k_{N_m - 1}}^m \left( \mb{v} \otimes \mb{I}_3 \right) \left( \mb{v}^\top \otimes \mb{I}_3 \right)
% \),
% where $\mb{v} = [\mb{0}_{N_k}^\top,\, \mb{1}_{N_m - N_k}^\top]^\top$. Substituting this into the expression for $\tilde{\mc{J}}_{\bs\theta}$, we obtain
% \[
% \left( \mb{v}^\top \otimes \mb{I}_3 \right) \mb{U}_u \mb{D}_m = \sum_{i = N_k + 1}^{N_m - 1} \left[ \mb{m}_{k_i} \right]_\times = \left[ \sum_{i = N_k + 1}^{N_m - 1} \mb{m}_{k_i} \right]_\times.
% \]
% Consequently, the rank of the expression for $\tilde{\mc{J}}_{\bs\theta}$ shows that
% \begin{equation*}
%     \begin{aligned}
% &\operatorname{rk}\!\lt\mb{U}_u^\top\mb{D}_m^\top\!\lt\bs\Lambda_m\!\otimes\mb{I}_3\rt\mb{U}_u\mb{D}_m\rt\!\\&=\operatorname{rk}\!\lt\!\lt\mb{v}^\top\!\!\otimes\mb{I}_3\rt\!\mb{U}_u\mb{D}_m\rt=\operatorname{rk}\lt\left[\sum_{i=N_k+1}^{N_m-1}\mb{m}_{k_i}\right]_\times\rt\leq2.
%     \end{aligned}
% \end{equation*}
% That is, the actuation mapping becomes deficient in rank, and we conclude that $\tilde{\mc{J}}_{\bs\theta} = 0$.
% \end{proof}

Lemma~\ref{lmm:singularity} indicates that placing all embedded magnets at boundary positions leads to a singularity in $\tilde{\mc{J}}_{\bs\theta}$, as expected. This degeneration of the measure can also be interpreted in terms of an effective reduction in the number of independent magnets. Specifically, magnets located at the proximal end have no actuation effect, while those at the distal end act collectively as a single unit. As a result, under boundary-concentrated configurations, the effective number of embedded magnets is reduced to one. From Theorem~\ref{thm:DoF}, we notice $\operatorname{rk}\lt\mb{M}(\bs\theta)\rt=2$ when $N_m=1$, and the Jacobian $\frac{\pt\bs\theta}{\pt\mb{b}_u}$ becomes deficient in rank, leading to a singularity of immersion Jacobian. This structural degeneracy provides a tractable mechanism for determining the optimal solution to a class of objective functions, as formalized in the following theorem.
% 引理4表明，嵌入磁铁placed at boundary value将导致\tilde{\mc{J}}_{\bs\theta}陷入奇异。这一测度退化现象同样可以由嵌入磁铁的等效个数减少解释。位于proximal end的磁铁不具备驱动能力，而位于distal end的磁铁被视为一个整体。因此，boundary分布下，嵌入磁铁的等效个数始终为1。由定理4我们知道，当$N_m=1$时，$\rk M(\bs\theta)=2$，这将导致$\pt\bs\theta/\pt\mb{b}_u$不再列满秩，从而使得$\mc{J}_{\bs\theta}$陷入奇异。利用这一性质，优化框架对于一类指标的最优化解能够很容易确定，我们有如下定理
\begin{theorem}
\label{thm:optimalresult}
    If the performance index satisfies $z(\mb{b}_u;L_k)>0$, and the structural parameter $L_k$ satisfies the boundary configuration condition in Lemma~\ref{lmm:singularity}, then the objective $\tilde{\mc{Z}}$ attains its minimum at this configuration.
\end{theorem}

\begin{proof}
Since \(z > 0\) and \(\tilde{\mc{J}}_{\bs\theta} \geq 0\), it follows that \(\tilde{\mathcal{Z}}\geq 0\) for any admissible \(L_k\). When \(L_k\) satisfies the boundary configuration condition in Lemma~4, we have \(\tilde{\mc{J}}_{\bs\theta}(\mb{b}) \equiv 0\) over \(\mathbb{B}\), and thus \(\tilde{\mathcal{Z}} = 0\). 
\end{proof}
Due to the Jacobian singularity at boundary configurations, the value of the objective becomes insensitive to the gradient of the performance index. However, the minimum spacing constraint prevents the structural parameter $L_k$ from reaching such degenerate configurations. Since $\mc{J}_{\bs\theta}$ remains continuous on $L_k$, it stays small in the vicinity of the boundary, making nearby configurations highly competitive in minimizing the objective. In this case, the performance gradient becomes the key factor in selecting specific configurations. One can also reduce the optimization bias by considering the objective as $\mc{Z}_{\mr{vol}}/\mc{Z}_{\mr{dex}}$. This normalized objective reflects the average performance on the configuration space and is insensitive to Jacobian singularity.
% From Lemma~\ref{lmm:approximmersionJacobian}, we have the following approximation:
% \begin{equation}
% \label{eqn:approxiGram}
% \begin{aligned}
%     \mb{G}_{\bs\theta}&\approx\mb{U}_u^\top\mb{D}_m^\top\lt\bs\Lambda_m\otimes\mb{I}_3\rt\mb{D}_m\mb{U}_u\\
%     &=\sum_{ij}\Lambda_{ij}^m\lt\mb{m}_{k_i}\mb{m}_{k_j}^\top-\lt\mb{m}_{k_i}^\top\mb{m}_{k_j}\rt\mb{I}_3\rt
% \end{aligned}
% \end{equation}
% in which $\mb{m}_{k}=M(\cos(\theta_0^{k})\bar{\mb{t}}-\sin(\theta_0^{k})\mb{v})$. 
% While the spectrum is analytically elusive for general $\mb{v}\in\mathbb{S}^2$, closed-form expressions are still available in the following cases.

\textbf{Example: }
Consider a MeSCR with two axially embedded magnets actuated by a spatially uniform field $\mb{b}_u$ with $\mb{s}^\top\bar{\mb{t}}=0$, where the global manipulability is to be maximized \cite{yoshikawa1985manipulability}. 
The embedded magnets have identical dipole strength. 
Let one magnet be fixed at the distal end (\(L_{k_1} = L\)), while the position of the other magnet \(L_{k_0}\) serves as the design variable, subject to the boundary constraint \(0 < L_{k_0} < L\). 
% From Corollary~\ref{cor:1}, 
For planar deflection, 
the joint variable simplifies to \(\boldsymbol{\theta}_i = \theta_i\mb{n}\), with $\mb{n}\in\mathbb{S}^2$ represents the normal vector.
In this configuration, the MeSCR becomes kinematically equivalent to a planar revolute open-chain mechanism. 
Let \(\theta_i^j \triangleq \sum_{l=i}^j \theta_l\) denote the cumulative joint angle. 
In the regime of weak magnetic actuation, the global manipulability index has the linearized expression on $\theta_{k_0}^{k_1}$: \(z\approx L_{k_0}(L - L_{k_0}) \left|\sin\theta_{k_0}^{k_1} \right|
\), where the analytical form is derived from the planar 2-joint revolute open-chain mechanism.

\subsubsection{Optimal Orientation}
We parameterize the in-plane magnetic moments by angles $\{\varphi_k\}$ with respect to the $\bar{\mb{v}}$-axis, where the initial orientation is $\bar{\mb{t}}$.
Let $\varphi_{k_0}^{k_1}\!\triangleq\!\varphi_{k_1}-\varphi_{k_0}$ denote the relative orientation, and assume a spatially uniform planar field $\mb{b}_u\!\in\!\mathbb{R}^3$ with support on the $\bar{\mb{t}}$-$\bar{\mb{u}}$ plane. 
Under the weak-field regime and planar bending, the immersion Jacobian admits the linearized form
\(
\mc{J}_{\bs\theta}\;\approx\;c_0\big|\sin\!\big(\varphi_{k_0}^{k_1}+\theta_{k_0}^{k_1}\big)\big|,
\label{eq:ori_Jlin}
\)
where $c_0>0$ collects geometry- and stiffness-related constants.
The corresponding global performance functional can be written as
\begin{equation}
    \mc{Z}\!\left(\varphi_{k_0}^{k_1}\right)
    \;=\;
    c_1\!(
        \sin(\varphi_{k_0}^{k_1}+2\theta_{k_0}^{k_1})/4
        \;-\;
        \theta_{k_0}^{k_1}\cos(\varphi_{k_0}^{k_1}
    )/2),
    \label{eq:ori_objective}
\end{equation}
with $c_1>0$ independent of $\varphi_{k_0}^{k_1}$. 
% Taking the derivative with respect to the relative orientation yields
% \begin{equation}
%     \nabla_{\varphi_{k_0}^{k_1}}\mc{Z}
%     \;=\;
%     \frac{c_1}{4}\cos(\varphi_{k_0}^{k_1}+2\theta_{k_0}^{k_1})
%     \;+\;
%     \frac{c_1\,\theta_{k_0}^{k_1}}{2}\sin\varphi_{k_0}^{k_1}.
%     \label{eq:ori_grad}
% \end{equation}
In the small-angle limit, the stationarity condition leads to
\begin{equation}
    \lim_{\theta_{k_0}^{k_1}\to0}\nabla_{\varphi_{k_0}^{k_1}}\mc{Z}=0
    \;\;\Longrightarrow\;\;
    \big(\varphi_{k_0}^{k_1}\big)^\ast \;=\; \frac{\pi}{2}\;(\text{mod }\pi),
    \label{eq:ori_opt_small}
\end{equation}
indicating that the optimal in-plane orientation is orthogonal in the uniform weak field regime.
Interestingly, it is analogous to the orthogonal joint arrangement in serial manipulators, where perpendicular joint axes maximize manipulability by decoupling motion directions around the nominal configuration.
Similar orthogonal magnetization has been widely adopted in radially magnetized catheter robots~\cite{leeSteeringTunnelingStent2021,zhangMagneticallyActuatedMicrocatheter2025}, where the magnetic moment is aligned perpendicular to the backbone to enhance steering torque and directional responsiveness under uniform magnetic fields.
% Moreover, for nonzero but small $\theta_{k_0}^{k_1}$, a first-order correction can be obtained from \eqref{eq:ori_grad}. 
% Letting $\varphi_{k_0}^{k_1}\!=\!\frac{\pi}{2}+\delta$ with $|\delta|\!\ll\!1$ and keeping terms up to $\mc{O}(\theta_{k_0}^{k_1})$ and $\mc{O}(\delta)$ gives
% \begin{equation}
%     \delta^\ast \;\approx\; -\,\theta_{k_0}^{k_1},
%     \qquad
%     \Rightarrow\quad
%     \big(\varphi_{k_0}^{k_1}\big)^\ast
%     \;\approx\;
%     \frac{\pi}{2}-\theta_{k_0}^{k_1}.
%     \label{eq:ori_firstorder}
% \end{equation}
% Therefore, the optimal relative orientation remains close to the orthogonal configuration and shifts linearly with the accumulated bending angle. This result is purely geometric, independent of the actuation amplitude and material parameters, and thus exhibits scale invariance, consistent with the placement analysis.

%%%%%%%%%

\subsubsection{Optimal Placement}
We focus on axially magnetized configurations, as they are the most common design adopted in continuum magnetic robots.
Applying the approximation from \eqref{eqn:approxiGram}, the immersion Jacobian also admits the linearized form:
\begin{equation*}
    \begin{aligned}
        &\mc{J}_{\bs\theta} \approx\\
    &\begin{cases}
        \!M^3\!\sqrt{\Lambda_{k_0}^m (\Lambda_{k_1}^m - \Lambda_{k_0}^m)(3\Lambda_{k_0}^m + \Lambda_{k_1}^m)} \left| \theta_{k_0}^{k_1}\!\right|,\text{if } \bar{\mathbf{m}}_{k_0}\!=\!\bar{\mathbf{m}}_{k_1}, \\
        \!M^3\!\sqrt{\Lambda_{k_0}^m} (\Lambda_{k_1}^m - \Lambda_{k_0}^m)\left| \theta_{k_0}^{k_1}\!\right|,\text{if } \bar{\mathbf{m}}_{k_0} = -\bar{\mathbf{m}}_{k_1}, 
    \end{cases}
    \end{aligned}
\end{equation*}
% \(\mc{J}_{\bs\theta}\approx M^2\sqrt{\Lambda_{k_0}^m\lt\Lambda_{k_1}^m-\Lambda_{k_0}^m\rt}\left|\theta_{k_0+1}^{k_1}\right|\)
The cumulative rotation angle can also be approximated using \eqref{eqn:explicitsolution} as $\left| \theta_{k_0}^{k_1} \right| \approx\frac{L - L_{k_0}}{EI}\left|\bar{\mb{m}}_{k_1}^\top[\mb{n}]_\times^\top\mb{b}_u\right|$. Substituting this into the above expressions for \(z\) and \(\mc{J}_{\bs\theta}\), we can factor the dependence on $L_{k_0}$ out of the integral in the objective. The structural optimization problem thus reduces to:
\begin{equation*}
\begin{aligned}
    \max_{L_{k_0}}\begin{cases}
        \!L_{k_0}^2(L-L_{k_0})^4\sqrt{4L_{k_0}^2\!+\!(L-L_{k_0})^2},\mbox{if}\ \bar{\mb{m}}_{k_0}\!=\!\bar{\mb{m}}_{k_1},\\
        \!L_{k_0}^2(L-L_{k_0})^5,\mbox{if}\ \bar{\mb{m}}_{k_0}\!=\!-\bar{\mb{m}}_{k_1}.
    \end{cases}
\end{aligned}
\end{equation*}
% \begin{equation}
    % \[\max_{L_{k_0}}\ L_{k_0}^2(L-L_{k_0})^4.\]
% \end{equation}
Then, the optimal magnet placement is obtained by using the first-order necessary condition as 
\[
L_{k_0}^\ast \approx 
\begin{cases}
1/2.718\ L,\ \text{if } \bar{\mathbf{m}}_{k_0} = \bar{\mathbf{m}}_{k_1}, \\
2/7\ L,\ \text{if } \bar{\mathbf{m}}_{k_0} = -\bar{\mathbf{m}}_{k_1}.
\end{cases}
\]
The result is entirely determined by the structural geometry, independent of material or actuation parameters, and thus exhibits scale invariance. Interestingly, when $\bar{\mb{m}}_{k_0}=\bar{\mb{m}}_{k_1}$, the ratio of optimal placement $L/L_{k_0}^*\approx 2.718$ is numerically close to Euler's number, suggesting a geometric correspondence with the logarithmic spiral structures widely employed in soft continuum design. 
% 优化结果具有尺度不变性。有趣的是，相同磁矩朝向时，最优长度比$L/L_{k_0}=2.718$非常接近于欧拉数. 这可能暗示最优服从软体机器人设计中被广泛采用的对数螺线结构。

\begin{figure}[t]
    \centering
    \includegraphics[width=\columnwidth]{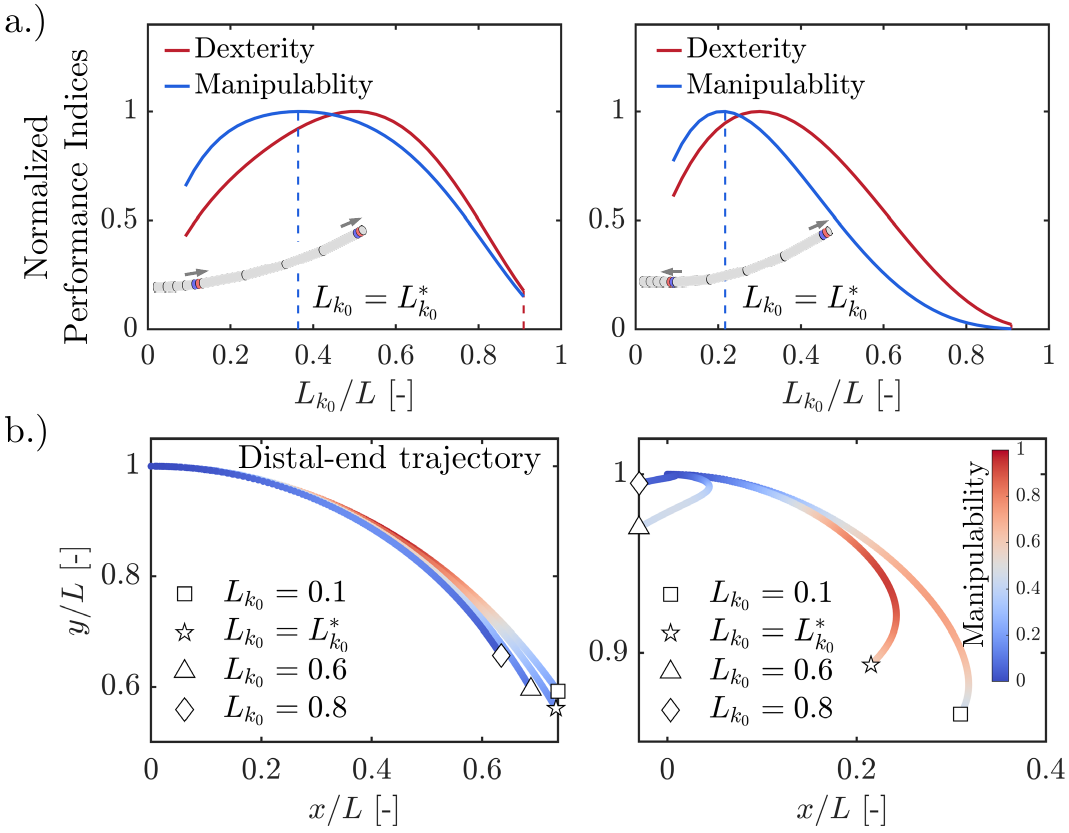}
    \caption{a.) Normalized manipulability and dexterity of MeSCRs with two embedded magnets as functions of normalized magnet position $L_{k_0}/L\in[0, 1]$: (left) aligned magnetic moments, (right) opposing magnetic moments. b.) Distal end trajectories under various magnet placements $L_{k_0}/L$ and moments. Color gradient encodes normalized local manipulability.}
    % 具有两枚嵌入磁铁的MeSCR的归一化性能指标（Manipulability和Dexterity）在相对长度（$L_{k_0}/L$）为0-1内的数值结果。同向磁矩（左）和异向磁矩（右）
    \label{fig:normalizedindex}
\end{figure}

We also evaluated the kinematic dexterity of the MeSCR by minimizing the distortion index in \cite{parkKinematicDexterityRobotic1994}. 
Since the distortion density satisfies the conditions of Theorem~\ref{thm:optimalresult}, the optimal solutions are expected to concentrate near boundary configurations. 
The numerical results in Fig.~\ref{fig:normalizedindex} confirm this behavior. 
Across both aligned and opposing magnetization modes, the distortion minimizing designs consistently collapse toward the distal end, whereas the manipulability maximizing designs follow the placement trends predicted by the analytical model. 
These outcomes highlight a clear structural organization of the performance landscape. 
Manipulability is governed by the separation and orientation of magnetic sources, while distortion tends to favor extreme placements that suppress curvature variation. 
The reachable trajectories further show that improved local dexterity does not necessarily translate into expanded global coverage. 
This observation reinforces the intrinsic mismatch between local controllability and global reachability and indicates that structural optimization must balance these competing geometric effects rather than optimizing either in isolation.

% \begin{figure*}[t]
%     \centering
%     \includegraphics[width=1.0\textwidth]{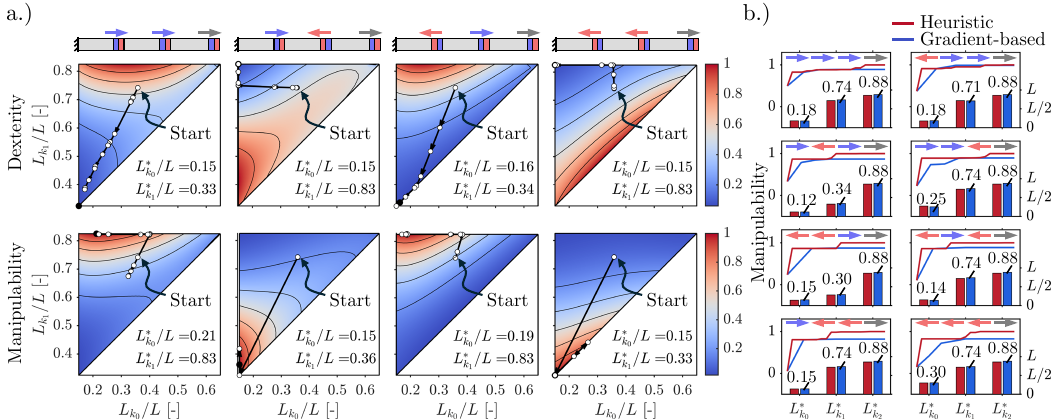}
%     \caption{a.) Optimization results under $2^2$ magnetic orientation combinations. The background color maps represent normalized performance values obtained via exhaustive search. Black lines indicate trajectories of the gradient-based optimization. b.) Optimization results across $2^3$ magnetic orientations, comparing the gradient-based method with a heuristic method. Bar plots show the configuration at optimized manipulability with convergence curves overlaid.}
%     \label{fig:numericalresult}
% \end{figure*}

% 此外，论文也没有提供选择该方法而非其他广泛用于工程优化的方法（例如进化计算）的理由。如果采用其他方法，结果会有何不同？
% I have used the evolutionary algorithm 

\subsection{Numerical Results for Multi-magnet Configurations}
For MeSCRs with three embedded magnets, closed-form optimization becomes analytically intractable, thus we employ the gradient-based scheme in Algorithm~\ref{alg:two_level_opt}. 
The numerical experiments (see Fig.~S1) show that the method converges consistently across all orientation patterns and produces solutions that agree with exhaustive or heuristic baselines, which indicates that the objective landscape remains smooth enough for reliable gradient-driven search.

The structural behaviors observed in the two-magnet case persist in the three-magnet configuration. 
More importantly, the aggregated results reveal clear structural tendencies that persist across all configurations. The first two magnets exhibit highly concentrated optimal placements near 
\(L_{k_0}^*/L \approx 0.18\) 
and 
\(L_{k_1}^*/L \approx 0.74\),
reflecting a general preference for distributed magnetic sources that enrich the set of attainable torque directions. Orientation also plays a decisive role. Opposing pairs drive the optimizer to increase their spacing and shift them toward the proximal region, while like-oriented pairs naturally cluster toward one end to exploit larger moment arms. These consistent behaviors show that the interaction between magnet placement and orientation organizes the shape of the actuation space and dominates the resulting optimal designs.

\section{Optimization under General Fields}
\label{sec:V}

In this section, we extend the proposed structural optimization framework to dipole-generated fields and investigate the resulting equilibrium behavior, kinematic performance, and optimal embedded-magnet configurations.

\subsection{Numerical Optimization under Single Dipole Fields}
% 这部分下面的描述还得接着砍
\begin{table}[t]
\centering
\caption{Geometric and material properties}
\begin{tabular}{lll}
\hline
\rowcolor[HTML]{EFEFEF}
&
\textbf{Parameters} (Description) & \textbf{Value} $[\mbox{Unit}]$ \\ \hline
\multirow{6}{*}{Silicone tube} & $E$ (Young's modulus) & $20$ $[\mathrm{MPa}]$ \\
& $OD$ (Outside diameter) & $1.2$ $[\mathrm{mm}]$ \\
& $ID$ (Inside diameter) & $0.8$ $[\mr{mm}]$ \\
& $L$ (Length) & $40$ $[\mr{mm}]$\\ 
& $\rho$ (Poisson's ratio) & $0.49$ $[\mbox{-}]$ \\
& $N$ (Number of joints) & $8/12$ $[\mbox{-}]$ \\
\hline 
\multirow{5}{*}{Micromagnets} & $B_r$ (Remanence) & $1.2$ $[\mr{T}]$ \\
& $L_m$ (Length) & $5/3/2$ $[\mr{mm}]$ \\ 
& $D_m$ (Diameter) & $1$ $[\mr{mm}]$ \\ 
& $L_g$ (Minimal gap) & $5/3/2$ $[\mr{mm}]$ \\
& $N_m$ (Number of megnets) & $2/3$ $[\mbox{-}]$ \\
\hline
Actuating Magnet & $M_{\mr{b}}$ (Dipole moment) & $342.86$ $[\mr{A\cdot m^2}]$ \\ \hline
\end{tabular}
\label{tab:MSCR1properties}
\end{table}

% 这里是固定了Lm，实际上还要考虑另一个无量纲参数

\begin{figure}[ht]
    \centering
    \includegraphics[width=0.5\textwidth]{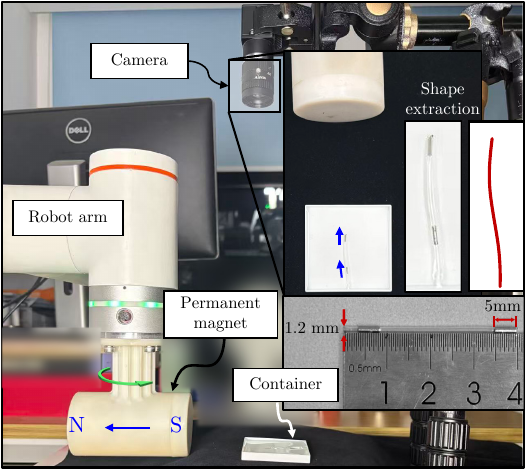}
    \caption{
    Experimental setup: A permanent magnet attached to a 6-DoF robot arm generates a spatially varying dipole field. 
    The MeSCR is placed inside a transparent container, while a downward-facing camera records its deformation for subsequent shape extraction and Jacobian estimation. 
    Insets show details of the MeSCR geometry (outer diameter \SI{1.2}{mm}, embedded magnet length \SI{5}{mm}) and an example of the extracted centerline.
    }
    \label{fig:experimentsetup}
\end{figure}

\begin{figure}[ht]
    \centering
    \includegraphics[width=\linewidth]{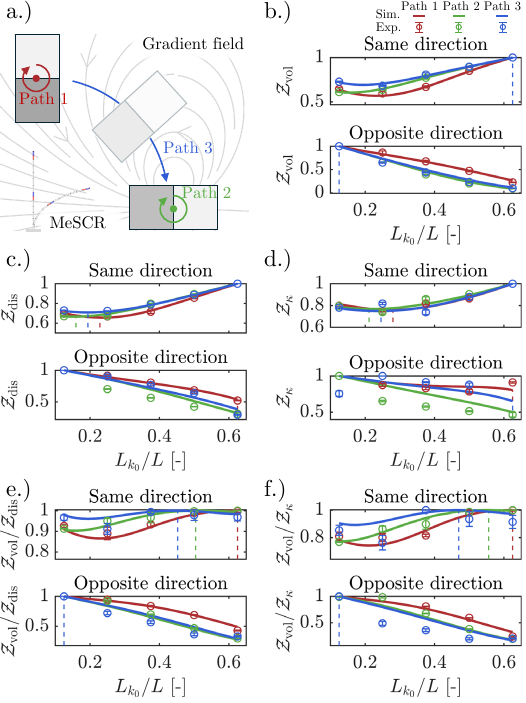}
    \caption{Performance evaluation under three dipole-field task paths.
    a.) Illustration of the three task paths.
    b.)–f.) Normalized kinematic performances with respect to the movable magnet position $L_{k_0}/L$ with same and opposite directions of orientation:
    b.) $\mc{Z}_{\mathrm{vol}}$, 
    c.) $\mc{Z}_{\mathrm{dis}}$, 
    d.) $\mc{Z}_{\kappa}$, 
    e.) $\mc{Z}_{\mathrm{vol}}/\mc{Z}_{\mathrm{dis}}$, 
    and f.) $\mc{Z}_{\mathrm{vol}}/\mc{Z}_{\kappa}$. 
    Solid curves correspond to numerical results for the three paths, and dashed lines indicate the optimal placements. Experimental data are plotted as discrete markers.}
    \label{fig:exptwomag}
\end{figure}

To study the effect of embedded magnet configurations under dipole fields, we evaluate the two principal orientation modes across representative task paths.
The experimental setup is presented in Fig.~\ref{fig:experimentsetup}, and the results comparison to simulations are summarized in Fig.~\ref{fig:exptwomag}, Fig.~S2, and Table~S1, which exhibit consistent trends that align with our analytical insights. 
For the same direction magnetization, all kinematic indices preserve their characteristic ordering from the uniform field analysis. 
The increasing actuation strength toward the distal end leads manipulability to prefer distal placement, while indices sensitive to curvature variation attain their maxima at intermediate spacing. 
These behaviors remain stable across task paths and magnet lengths, indicating that constructive torque accumulation dominates the optimization landscape.

Opposite direction magnetization produces a markedly different pattern. The interaction of opposing torques generates a cancellation zone that governs the deformation geometry, causing all performance measures to collapse toward a narrow proximal region. 
The near invariance across paths and magnet lengths confirms that this behavior is determined primarily by the intrinsic structure of the torque field rather than by task-specific sampling. 
The competing tendencies predicted in the uniform field case also persist: increasing magnet separation strengthens net actuation, whereas proximity favors smoother deformation. 
Under dipole actuation, however, the dominant role of cancellation compresses all optimal solutions toward a common proximal configuration.
These results provide direct experimental validation that magnet orientation is the primary structural factor shaping the Jacobian spectrum and the geometry of the equilibrium manifold. 
Constructive or destructive torque interactions organize the qualitative behavior of all performance measures, while task paths and scaling parameters exert only secondary influence.

\begin{figure}[t]
    \centering
    \includegraphics[width=\columnwidth]{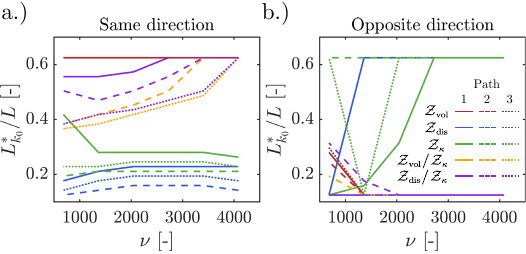}
    \caption{Optimal movable magnet position under varying values of the nondimensional parameter $\nu$. a.) Same direction magnetization. b.) Opposite direction magnetization.}
    \label{fig:varyE}
\end{figure}

We further examine the dependence of optimal placement on the nondimensional parameter
\[
\nu = \frac{4\pi EI}{N_m B_r M_{\mathrm{b}} d^3 A L^2}.
\]
The results in Fig.~\ref{fig:varyE} show that for the same direction magnetization, the optimal positions vary smoothly with \(\nu\) and retain the established ordering. 
For opposite orientation, the optimal placements exhibit larger variation at small \(\nu\) but converge rapidly toward boundary configurations as elasticity increases. 
The presence or absence of internal torque cancellation thus provides the underlying organizing principle for all indices across field strength, task path, and geometry.

\subsection{Numerical Optimization under General Fields}

To evaluate whether the dipole–field behaviors observed in the two magnet case generalize to higher dimensional designs, we extend the analysis to MeSCRs with three embedded magnets, allowing two magnets to vary in both position and orientation. 
The experiments cover multiple dipole configurations, task paths, and magnetic strength parameters \((\nu_1\!-\!\nu_5)\), with results summarized in Fig.~S3. A clear and unified set of structural tendencies emerges across all settings. 
The placement hierarchy mirrors that of the two magnet case: constructive torque accumulation drives distal optimality for \(\mathcal{Z}_{\mathrm{vol}}\), while indices sensitive to curvature modulation, such as \(\mathcal{Z}_{\mathrm{dis}}\) and \(\mathcal{Z}_{\kappa}\), consistently favor proximal configurations. 
Ratio based measures converge to stable mid–distal solutions, indicating path invariant behavior even under complex field variations. 
Orientation exhibits the same role identified previously. 
Aligned moments promote torque amplification and favor more concentrated arrangements, whereas partial spreading becomes advantageous when deformation diversity is required. 
The most consistent layouts arise from the combined indices, which naturally balance these competing trends. 
Together with the two magnet dipole–field results, these findings demonstrate that orientation dependent torque interactions and spacing driven actuation diversity form a unified geometric mechanism that persists across increased design dimensionality and external field complexity.

% \begin{figure*}[ht]
%     \centering
%     \includegraphics[width=\textwidth]{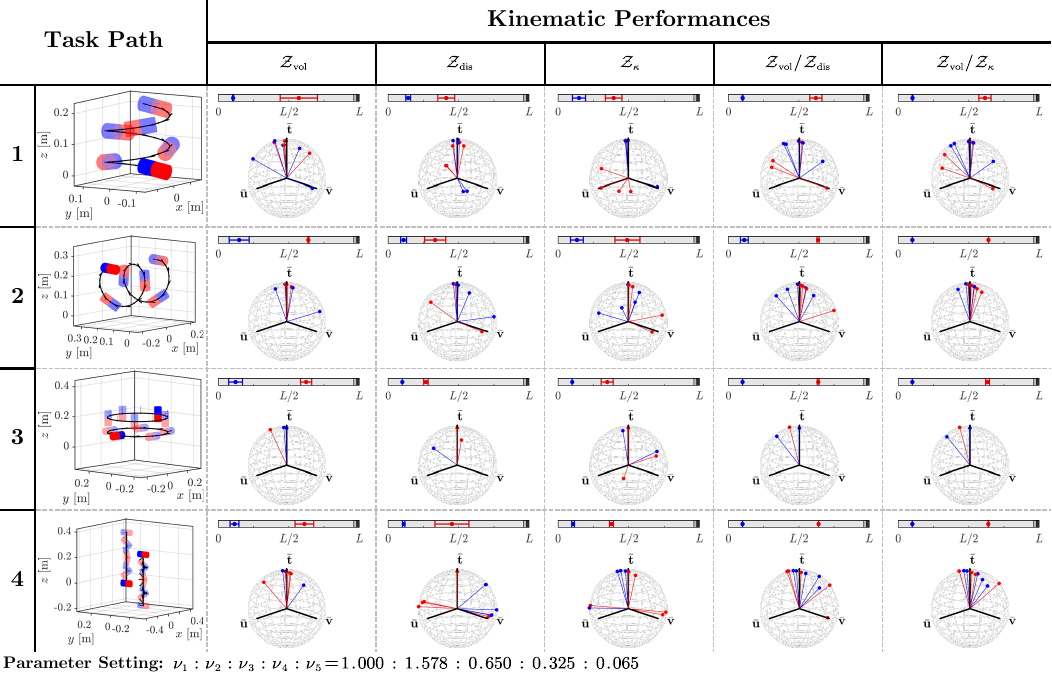}
%     \caption{Optimized magnet configurations for MeSCRs with three-embedded magnets under four task paths.
%     For each task path and each kinematic performance index, the left subpanel illustrates the dipole-field actuation path, and the right subpanel shows the optimized embedded-magnet layout: horizontal bars indicate the optimal magnet positions along the backbone, and spherical plots depict the corresponding optimal orientations. 
%     Blue and red denote the two movable magnets, and black denotes the fixed distal magnet.}
%     \label{fig:comprehensivecomp}
% \end{figure*}

\section{Conclusion}
\label{sec:VI}

This work introduced a structural optimization framework that unifies equilibrium mechanics with kinematic performance for magnetically actuated soft continuum robots. 
By formulating magnetic loading as an actuation-dependent equilibrium manifold and evaluating Jacobian spectral indices on its pullback geometry, the approach enables physically consistent performance assessment across both analytical and numerically computed configurations. 
The results across uniform and dipole-generated fields reveal that the attainable motion of magnetic continuum structures is fundamentally organized by the interaction pattern of embedded magnets. 
Constructive torque interactions promote distal or mid–distal arrangements that strengthen net actuation, whereas opposing orientations create intrinsic cancellation regions that compress optimal designs toward proximal configurations. 
These tendencies persist across different actuation paths, field strengths, and magnet counts, indicating that orientation-driven torque geometry is a primary determinant of global kinematic behavior.

Certain limitations remain. 
The analytical development relies on small-deflection assumptions and a discrete PRB representation, which may reduce accuracy under strong fields or highly compliant materials.
Magnet–magnet interactions, nonlinear constitutive effects, and dynamic behaviors are not explicitly modeled. 
Extending the approach to richer material descriptions, heterogeneous magnetization, and closed-loop actuation could broaden its practical utility. Further characterization of equilibrium-manifold geometry may deepen understanding of reachable sets, while integrating sensing, real-time control, or data-driven design tools could facilitate systematic structure actuation co-design for future magnetic soft continuum robots.

\bibliographystyle{IEEEtran}  %这是你要使用的格式,比如要投IEEE,就写IEEEtran\part{title}
\bibliography{IEEEabrv, references}%这个是加载你的bib,你可以理解从文献数据库中加载要引用的

%%%%%%%%%%%%%%%%%%%%%%%%%%%%%%%%%%%%%%%%%%%%%%%%%%%%%%%%%%%%%%%%%%%%%%%%%%%%%%%%
\vspace{-1cm}
\begin{IEEEbiography}[{\includegraphics[width=1in,height=1.25in,clip,keepaspectratio]{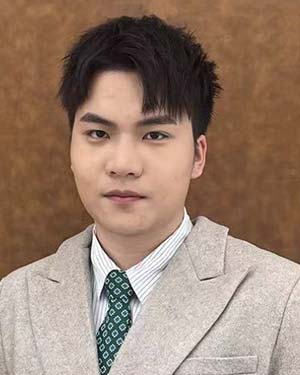}}]{Zhiwei Wu} (Graduate Student Member, IEEE) received the B.E. degree in control science and engineering from the College of Information Science and Technology, Beijing University of Chemical Technology, Beijing, China, in 2022. He is currently working toward the Ph.D. degree in control science and engineering with the Department of Automation, Beijing Institute of Technology, Beijing. His research interests include surgical robotic systems and the magnetic soft continuum robots.
\end{IEEEbiography}

\vspace{-1cm}
\begin{IEEEbiography}[{\includegraphics[width=1in,height=1.25in,clip,keepaspectratio]{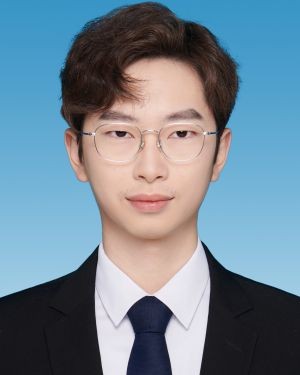}}]{Jiahao Luo} received the M.E. degree in automation from the Guangdong University of Technology, Guangzhou, China, in 2024. He is currently pursuing the Ph.D. degree in control science and engineering at the Beijing Institute of Technology, Beijing, China. His current research interests include surgical robotics and magnetic field localization and control.
\end{IEEEbiography}

\vspace{-1cm}
\begin{IEEEbiography}[{\includegraphics[width=1in,height=1.25in,clip,keepaspectratio]{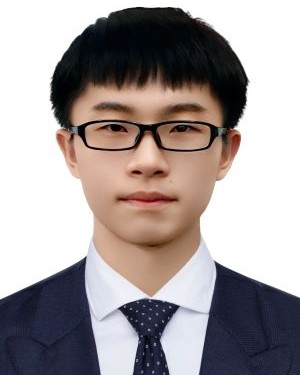}}]{Siyi Wei} received the Ph.D. degree in control science and engineering from the Beijing Institute of Technology, Beijing, China, in 2025.
He joined the Beijing Institute of Technology in 2025, where he is currently a postdoctoral researcher.
His current research interests include surgical robotics and soft robots.
\end{IEEEbiography}

\vspace{-1cm}
\begin{IEEEbiography}
[{\includegraphics[width=1in,height=1.25in,clip,keepaspectratio]{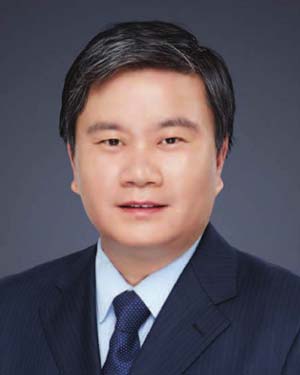}}]{Jinhui Zhang} received the Ph.D. degree in control science and engineering from the Beijing Institute of Technology, Beijing, China, in 2011, where he was an Associate Professor from March 2011 to March 2016. He was a Professor with the School of Electrical and Automation Engineering, Tianjin University, Tianjin, China, from April 2016 to September 2016. He joined the Beijing Institute of Technology in October 2016, where he is currently a Professor. His research interests include control systems and surgical robotics.
\end{IEEEbiography}

\end{document}

% --- supplement: supp.tex ---

\maketitle

\tableofcontents

\newpage
\section{Hessians of the Magnetic Potential Energy}
% \label{append:mpe}

The Hessian of $\mc{E}_m$ is known as symmetric. It is convenient to present the matrix in parts.
% Hessian of $\mc{E}_m$ is known as a symmetric matrix. 
A portion of the Hessian has the following representation:
\begin{equation*}
\begin{aligned}
    \frac{\pt^2\mc{E}_{m}}{\pt\bs\theta_i\pt\bs\theta_i^\top}=\sum_{k\in\mathbb{K},k\geq i}\operatorname{sym}\lt\mb{J}_{\mb{R}_l}^{\bs\theta_i}\mb{R}_{0}^{i^\top}\left[\mb{b}_k\right]_\times^\top\left[\mb{m}_{k}\right]_\times\mb{R}_{0}^{i}\mb{J}_{\mb{R}_r}^{\bs\theta_i}+\mb{Q}_i(\bs\theta_i,\bs\rho_i)\rt\\
\end{aligned}
\end{equation*}
where $\mb{J}_{\mb{R}_l}^{\bs\theta_i}$ is the left Jacobian of the $\mr{SO}(3)$ manifold that \(\mb{J}_{\mb{R}_l}^{\bs\theta_i}=\lt\mb{J}_{\mb{R}_r}^{\bs\theta_i}\rt^\top\), and
\begin{equation*}
    \begin{aligned}
        \mb{Q}_i(\bs\theta_i,\bs\rho_i)&=\lt2\frac{\theta_i-\sin\theta_i}{\theta_i^3}-\frac{1-\cos\theta_i}{\theta_i^2}\rt\lt[\bs\theta_i]_\times[\bs\rho_i]_\times\rt\\
        &-\lt\frac{\theta_i-\sin\theta_i}{\theta_i^3}+2\frac{1-\cos\theta_i-\frac12\theta_i^2}{\theta_i^4}\rt\lt[\bs\theta_i]_\times^2[\bs\rho_i]_\times\rt\\
        &-\left(\frac{1-\cos\theta_i-\frac12\theta_i^2}{\theta_i^4}-3\frac{\theta_i-\sin\theta_i-\frac16\theta_i^3}{\theta_i^5}\right)\\
        &\times\lt[\boldsymbol\theta_i]_\times[\bs\rho_i]_\times[\bs\theta_i]_\times^2\rt\\
    \end{aligned}
\end{equation*}
is the derivative matrix of $\mb{J}_{\mb{R}_l}^{\bs\theta_i}$ with respect to $\bs\theta_i$. Also, $\theta_i=\|\bs\theta_i\|$ denotes the norm of $\bs\theta_i$, $\bs\rho_i=\mb{R}_{0}^{i^\top}\left[\mb{m}_{k}\right]_\times^\top\mb{b}_k$ denotes the residual vector, and $\operatorname{sym}\lt\mb{A}\rt=(\mb{A}+\mb{A}^\top)/2$ finds the symmetric part of any square matrix $\mb{A}$.

The second part of the Hessian is when the index $i<j$, which is expressed as follows:
% Hessian的第二部分是当索引$j>i$，有如下表达
\begin{equation*}
    \frac{\pt^2\mc{E}_{m}}{\pt\bs\theta_i\pt\bs\theta_j^\top}=\sum_{k\in\mathbb{K},k\geq j}\mb{J}_{\mb{R}_l}^{\bs\theta_i}\mb{R}_{0}^{i^\top}\left[\mb{b}_k\right]_\times^\top\left[\mb{m}_{k}\right]_\times\mb{R}_{0}^{j}\mb{J}_{\mb{R}_r}^{\bs\theta_j}.
\end{equation*}
The third part is when the index $i>j$, expressed by
% 第三部分
\begin{equation*}
    \begin{aligned}
        \frac{\pt^2\mc{E}_{m}}{\pt\bs\theta_i\pt\bs\theta_j^\top}&=\sum_{k\in\mathbb{K},k\geq i}\left(\mb{J}_{\mb{R}_l}^{\bs\theta_i}\mb{R}_{0}^{i^\top}\left[\mb{b}_k\right]_\times^\top\left[\mb{m}_{k}\right]_\times\mb{R}_{0}^{j}\mb{J}_{\mb{R}_r}^{\bs\theta_j}-\mb{J}_{\mb{R}_l}^{\bs\theta_i}\mb{R}_{0}^{i^\top}\left[\mb{b}_k\times\mb{m}_{k}\right]_\times^\top\mb{R}_{0}^{j}\mb{J}_{\mb{R}_r}^{\bs\theta_j}\right)\\
        &=\sum_{k\in\mathbb{K},k\geq i}\mb{J}_{\mb{R}_l}^{\bs\theta_i}\mb{R}_{0}^{i^\top}\left[\mb{m}_{k}\right]_\times^\top\left[\mb{b}_k\right]_\times\mb{R}_{0}^{j}\mb{J}_{\mb{R}_r}^{\bs\theta_j}
    \end{aligned}
\end{equation*}
which matches the transpose of the second part, as expected.

\section{Proof of Theorem and Lemma}
\subsection{Preliminary}
The equilibrium equation of the MeSCR can be written as
\begin{equation}
    \label{eqn:stdcontract}
    \bs\theta=\bs\Lambda^{-1}\mb{M}(\bs\theta)\mb{b}.
\end{equation}
\begin{proposition}
    \label{pro:1}
    The matrix-valued function $\mb{M}(\bs\theta)$ and the magnetic torque $\mb{M}(\bs\theta)\mb{b}$ are bounded on $\mathbb{R}^{3N}$ with positive constants $\mc{M}_0$ and $\mc{M}$ such that $\|\mb{M}(\bs\theta)\|\leq\mc{M}_0$ and $\|\mb{M}(\bs\theta)\mb{b}\|\leq\mc{M}$ for all $\bs\theta\in\mathbb{R}^{3N}$, respectively.
\end{proposition}

% \subsection{Proof of Proposition \ref{pro:1}}
\begin{proof}
% Sherman-Morrison-Woodbury
\label{proof:pro1}
Noticing that
\begin{equation*}
    \begin{aligned}
    \|\mb{M}_i(\bs\theta)\mb{b}\|&=\left\|\sum_{k\in\mathbb{K},k\geq i}\lt\left[\mb{m}_{k}\right]_\times^\top\mb{R}_{0}^{i}\mb{J}_{\mb{R}_r}^{\bs\theta_i}\rt^\top\mb{b}(\mb{p}_k)\right\|\\
    &\leq\left\|\mb{R}_{0}^{i}\mb{J}_{\mb{R}_r}^{\bs\theta_i}\right\|\sum_{k\in\mathbb{K},k\geq i}\left\|\left[\mb{m}_{k}\right]_\times\mb{b}(\mb{p}_k)\right\|\\
    &=\sum_{k\in\mathbb{K},k\geq i}M_kB_k\triangleq\mc{M}_i,
    \end{aligned}
\end{equation*}
it is convenient to obtain:
\begin{equation*}
\begin{aligned}
    \|\mb{M}(\bs\theta)\mb{b}\|&=\sqrt{\sum_{i=0}^{N-1}\|\mb{M}_i(\bs\theta)\mb{b}\|^2}\leq\sqrt{\sum_{i=0}^{N-1}\mc{M}_i^2}\triangleq\mc{M},
\end{aligned}
\end{equation*}
and
\begin{equation*}
    \|\mb{M}(\bs\theta)\|\!=\!\sqrt{\sum_{i=0}^{N-1}\|\mb{M}_i(\bs\theta)\|}\!\leq\!\sqrt{N_mk_{N_m-1}}\max_kM_k\triangleq\mc{M}_0.
\end{equation*}
\end{proof}

\begin{proposition}
    \label{pro:2}
    The vector-valued function $\mb{M}(\bs\theta)\mb{b}$ is Lipschitz continuous on $\mathbb{R}^{3N}$ with a Lipschitz constant $\mc{L}$ such that
    $\|\mb{M}(\bs\theta)\mb{b}-\mb{M}(\bs\varphi)\mb{b}\|\leq\mc{L}\|\bs\theta-\bs\varphi\|$ for all $\bs\theta,\bs\varphi\in\mathbb{R}^{3N}$.
\end{proposition}

% \subsection{Proof of Proposition \ref{pro:2}}
\begin{proof}
\label{proof:pro2}
% To prove that the right side of \eqref{eqn:stdcontract} is a contraction mapping, we first provide some preliminaries.
Given that the Hessian matrix is symmetric (see the supplementary material for details), it can be expressed equivalently as
\begin{equation*}
    \frac{\pt^2\mc{E}_m}{\pt\bs\theta\pt\bs\theta^\top}=\sum_{k\in\mathbb{K}}\operatorname{sym}\lt\mb{D}_k^\top\mb{C}_k\mb{D}_k+\mb{Q}^k\rt\triangleq\mathbf{S}_m.
\end{equation*}
Whence,
\begin{equation}
    \label{eqn:Dk}
    \mb{D}_k=\operatorname{blkdiag}\{\mb{R}_{0}\mb{J}_{\mb{R}_r}^{\bs\theta_0},\dots,\mb{R}_{0}^{k}\mb{J}_{\mb{R}_r}^{\bs\theta_k},\underbrace{\mb{0}_{3\times3},\dots,\mb{0}_{3\times3}}_{N-k-1}\},
\end{equation}
\begin{equation*}
    \mb{Q}^k=\operatorname{blkdiag}\{\mb{Q}_0,\dots,\mb{Q}_k,\underbrace{\mb{0}_{3\times3},\dots,\mb{0}_{3\times3}}_{N-k-1}\},
\end{equation*}
and
\begin{equation*}
    \mb{C}_k=\operatorname{blkdiag}\{2\mb{U}_{k\times k}-\mb{I}_{k\times k},\underbrace{0\dots0}_{N-k-1}\}\otimes\lt[\mb{b}_{k}]_\times^\top[\mb{m}_k]_\times\rt
\end{equation*}
with $\mb{U}_{k\times k}$ denotes the upper triangle matrix with all elements equal to $1$ and $\otimes$ denotes the Kronecker product. Then, using the triangle inequality and sub-multiplicative property, the following inequality holds for the spectrum norm of the Hessian matrix:
\begin{equation}
    \label{eqn:spectrumOfHessian}
    \begin{aligned}
        \left\|\mb{S}_m\right\|&=\left\|\sum_{k\in\mathbb{K}}\operatorname{sym}\left\{\mb{D}_k^\top\mb{C}_k\mb{D}_k+\mb{Q}^k\right\}\right\|\\
        % &\leq\sum_{k\in\mathbb{K}}\left\|\mb{D}_k^\top\mb{C}_k\mb{D}_k+\mb{Q}^k\right\|\\
        &\leq\sum_{k\in\mathbb{K}}\left\|\mb{D}_k^\top\right\|\left\|\mb{C}_k\right\|\left\|\mb{D}_k\right\|+\left\|\mb{Q}^k\right\|\\
        &=\sum_{k\in\mathbb{K}}\left\|\mb{C}_k\right\|+\left\|\mb{Q}^k\right\|
    \end{aligned}    
\end{equation}
where we used the fact that $\left\|\mb{D}_k^\top\right\|=\left\|\mb{D}_k\right\|=1$. The spectrum norm of $\mb{C}_k$ is straightforward to be given by
\begin{equation}
\label{eqn:spectrumOfCk}
    \begin{aligned}
        \left\|\mb{C}_k\right\|&=\left\|2\mb{U}_{k\times k}-\mb{I}_{k\times k}\right\|\left\|[\mb{b}_{k}]_\times^\top[\mb{m}_k]_\times\right\|\\
        &\leq\frac{4k}{\pi}\|\mb{b}_k\|\|\bar{\mb{m}}_k\|=\frac{4k}{\pi}M_kB_k
    \end{aligned}
\end{equation}
% 4k/pi
For small rotation angles, we note that
% \theta是增量，值很小
\begin{equation}
\label{eqn:spectrumOfQk}
    \begin{aligned}
\lim_{\bs\theta\to\mb{0}}\left\|\mb{Q}^k\right\|&=\lim_{\bs\theta\to\mb{0}}\max_{i=0,\dots,k}\left\{\left\|\mb{Q}_i\right\|\right\}=0.
        % &\approx 0
        % &<\frac32\|\mb{b}_k\|\|\bar{\mb{m}}_k\|
    \end{aligned}
\end{equation}
% fact: less than 1.2
Combing \eqref{eqn:spectrumOfHessian}, \eqref{eqn:spectrumOfCk}, and \eqref{eqn:spectrumOfQk}, the Jacobian of the vector-valued function $\mb{M}(\bs\theta)\mb{b}$ satisfies:
\begin{equation*}
    \left\|\frac{\pt}{\pt\bs\theta}\mb{M}(\bs\theta)\mb{b}\right\|=\left\|-\mb{S}_m\right\|\leq\sum_{k\in\mathbb{K}}\frac{4k}{\pi}M_kB_k\triangleq\mc{L}.
\end{equation*}
Therefore, the function is Lipschitz continuous on $\mathbb{R}^{3N}$ with the Lipschitz constant $\mc{L}$.

\end{proof}

\begin{theorem}
\label{thm:solunique} 
There exists a unique equilibrium solution to \eqref{eqn:stdcontract} if the following inequality holds:
    \begin{equation}
    \label{eqn:thm1cond}
    \max_{k\in\mathbb{K}} B_k<\frac{\pi}{4}\frac{\lambda_{\min}\lt\bs\Lambda\rt}{\sum_{k\in\mathbb{K}}kM_k}
\end{equation}
where $\lambda_{\min}(\cdot)$ denotes the smallest eigenvalue of a matrix.
\end{theorem}
\begin{lemma}
\label{lmm:eqpointbound}
Let $\bs\theta^*$ be any equilibrium point of \eqref{eqn:stdcontract}. Using Proposition \ref{pro:1}, we have  $\|\bs\theta^*\|=\|\bs\Lambda^{-1}\mb{M}(\bs\theta^*)\mb{b}\|\leq\frac{\mc{M}}{\lambda_{\min}(\bs\Lambda)}$.
\end{lemma}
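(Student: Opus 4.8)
The plan is to read the claim directly off the fixed-point structure of \eqref{eqn:stdcontract}. By definition, an equilibrium point $\bs\theta^*$ is a fixed point of the map $\bs\theta \mapsto \bs\Lambda^{-1}\mb{M}(\bs\theta)\mb{b}$, so substituting $\bs\theta^*$ into \eqref{eqn:stdcontract} immediately yields the stated equality $\bs\theta^* = \bs\Lambda^{-1}\mb{M}(\bs\theta^*)\mb{b}$, whence $\|\bs\theta^*\| = \|\bs\Lambda^{-1}\mb{M}(\bs\theta^*)\mb{b}\|$. The remaining task is simply to bound the right-hand side.

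First I would apply submultiplicativity of the Euclidean-induced matrix norm to split $\|\bs\Lambda^{-1}\mb{M}(\bs\theta^*)\mb{b}\| \leq \|\bs\Lambda^{-1}\|\,\|\mb{M}(\bs\theta^*)\mb{b}\|$. Next I would identify $\|\bs\Lambda^{-1}\|$ explicitly. Since $\bs\Lambda$ is block-diagonal with blocks $\operatorname{diag}(2G_iI_i/\ell_i,\, E_iI_i/\ell_i,\, E_iI_i/\ell_i)$ and all the moduli $G_i, E_i, I_i, \ell_i$ are strictly positive, $\bs\Lambda$ is symmetric positive definite; therefore $\|\bs\Lambda^{-1}\|_2 = 1/\lambda_{\min}(\bs\Lambda)$. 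Finally, Proposition \ref{pro:1} supplies the uniform bound $\|\mb{M}(\bs\theta^*)\mb{b}\| \leq \mc{M}$, valid for every $\bs\theta \in \mathbb{R}^{3N}$ and hence at the equilibrium in particular. Chaining these three facts gives $\|\bs\theta^*\| \leq \mc{M}/\lambda_{\min}(\bs\Lambda)$, as claimed.

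There is essentially no hard step here; the result is an immediate a priori estimate once Proposition \ref{pro:1} is available. The only point warranting a moment's care is the norm identity $\|\bs\Lambda^{-1}\|_2 = \lambda_{\min}^{-1}(\bs\Lambda)$, which relies on $\bs\Lambda$ being symmetric positive definite---guaranteed by the positivity of the physical material parameters---so that its eigenvalues coincide with its singular values. I would also flag, for context, that taken together with Lemma \ref{lmm:solbound} this estimate confirms that every equilibrium lies inside the very same ball that ultimately traps the gradient-flow trajectory, which is presumably the role it plays in the subsequent convergence argument.
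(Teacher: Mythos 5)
Your argument is correct and is essentially the same one the paper uses (the lemma statement itself carries the one-line proof): the fixed-point identity gives the equality, and submultiplicativity together with $\|\bs\Lambda^{-1}\|=\lambda_{\min}^{-1}(\bs\Lambda)$ and the bound $\|\mb{M}(\bs\theta)\mb{b}\|\leq\mc{M}$ from Proposition \ref{pro:1} gives the estimate. Your extra remark justifying $\|\bs\Lambda^{-1}\|_2=\lambda_{\min}^{-1}(\bs\Lambda)$ via positive definiteness of $\bs\Lambda$ is a sound, if routine, addition.
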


\subsection{Theorem and Lemma Stated for Proof}

\begin{theorem}
\label{thm:twistfree}
The MeSCR is material-twist-free ($\bs\theta_i^\top\bar{\mb{t}}_i=0$) for any actuating magnetic field, provided that the embedded magnets' magnetic moments are aligned axially, i.e., \(\bar{\mb{m}}_k=\pm M_k\bar{\mb{t}}\), and the magnetic field strength satisfies:
\begin{equation}
\label{eqn:thmcond2}
    \max_{k\in\mathbb{K}} B_k \leq \frac{\sqrt6 \lambda_{\min}(\bs\Lambda)}{\sum_{k\in\mathbb{K}} M_k}.
\end{equation}
\end{theorem}

\begin{theorem}
    \label{thm:DoF}
    The controllable DoF of the MeSCR does not exceed twice the number of embedded magnets.
\end{theorem}

\begin{lemma}
\label{lmm:approximmersionJacobian}
Suppose the inequality condition in Theorem~\ref{thm:solunique} holds. Then the immersion Jacobian admits the expansion
\begin{equation}
\begin{aligned}
\mc{J}_{\bs\theta}^2 &= \det\left(\mb{U}_u^\top\mb{M}^{\top} \bs\Lambda^{-2}\mb{M}\mb{U}_u \right)+\epsilon\\
    &\triangleq\tilde{\mc{J}}_{\bs\theta}^2 + \epsilon
\end{aligned}
\end{equation}
where the remainder term $\epsilon$ is bounded above as
\[
|\epsilon| \leq \frac{3(2\mc{L} - \mc{L}^2)}{(1 - \mc{L})^2} \frac{N_m^3 \mc{M}_0^6}{\Lambda_{\min}^6},
\]
and satisfies $\lim_{B_k \to 0} |\epsilon| \to 0$.
\end{lemma}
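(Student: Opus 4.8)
The plan is to read $\tilde{\mc{J}}_{\bs\theta}^2$ as the value of $\mc{J}_{\bs\theta}^2$ obtained by replacing the full stiffness Hessian $\mb{S}$ with its elastic part $\bs\Lambda$, so that $\epsilon$ is exactly the determinant perturbation induced by the substitution $\mb{S}^{-2}\to\bs\Lambda^{-2}$, and then to control this perturbation through a Neumann-series expansion of $\mb{S}^{-1}$ followed by a multilinear estimate of the resulting $3\times3$ determinant. First I would record the Gram matrix in closed form: since $\frac{\pt\bs\theta}{\pt\mb{b}_u}=\mb{S}^{-1}\mb{M}\mb{U}_u$ and $\mb{S}$ is symmetric (a Hessian plus the diagonal $\bs\Lambda$), we have $\mb{G}_{\bs\theta}=\mb{U}_u^\top\mb{M}^\top\mb{S}^{-2}\mb{M}\mb{U}_u$, whence $\mc{J}_{\bs\theta}^2=\det\mb{G}_{\bs\theta}$ and $\tilde{\mc{J}}_{\bs\theta}^2=\det\lt\mb{U}_u^\top\mb{M}^\top\bs\Lambda^{-2}\mb{M}\mb{U}_u\rt$. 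Both are $3\times3$ because $\mb{b}_u\in\mathbb{R}^3$, so the entire error lives inside a $3\times3$ determinant comparison.

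Next I would isolate the perturbation of the inverse stiffness. Factoring $\mb{S}=\bs\Lambda\lt\mb{I}+\bs\Lambda^{-1}\mb{S}_m\rt$ and setting $\mb{E}\triangleq\bs\Lambda^{-1}\mb{S}_m$, the hypothesis of Theorem~\ref{thm:solunique} guarantees $\|\mb{E}\|<1$ — this is precisely the estimate underlying the invertibility of $\mb{S}$ noted after that theorem. The Neumann series then gives $\mb{S}^{-1}=\bs\Lambda^{-1}+\bs\Delta$ with $\bs\Delta=-\mb{E}\lt\mb{I}+\mb{E}\rt^{-1}\bs\Lambda^{-1}$ and $\|\bs\Delta\|\leq\frac{\mc{L}}{\lt1-\mc{L}\rt\Lambda_{\min}}$, where I write $\mc{L}$ for the normalized magnitude $\|\mb{E}\|$. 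Expanding $\mb{S}^{-2}-\bs\Lambda^{-2}=\bs\Lambda^{-1}\bs\Delta+\bs\Delta\bs\Lambda^{-1}+\bs\Delta^2$ and collecting terms then yields the clean bound $\|\mb{S}^{-2}-\bs\Lambda^{-2}\|\leq\frac{2\mc{L}-\mc{L}^2}{\lt1-\mc{L}\rt^2}\Lambda_{\min}^{-2}$, which already contains the scalar prefactor of the claim.

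I would then transport this to the Gram level and bound the determinant. Writing $\mb{A}=\mb{U}_u^\top\mb{M}^\top\bs\Lambda^{-2}\mb{M}\mb{U}_u$ and $\mb{B}=\mb{U}_u^\top\mb{M}^\top\lt\mb{S}^{-2}-\bs\Lambda^{-2}\rt\mb{M}\mb{U}_u$, so that $\epsilon=\det\lt\mb{A}+\mb{B}\rt-\det\mb{A}$, and using $\|\mb{M}\|\leq\mc{M}_0$ (Proposition~\ref{pro:1}) together with $\|\mb{U}_u\|=\|\mb{1}_{N_m}\otimes\mb{I}_3\|=\sqrt{N_m}$, I obtain the scale $\|\mb{A}\|\leq N_m\mc{M}_0^2\Lambda_{\min}^{-2}$ and $\|\mb{B}\|\leq N_m\mc{M}_0^2\Lambda_{\min}^{-2}\cdot\frac{2\mc{L}-\mc{L}^2}{\lt1-\mc{L}\rt^2}$. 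For the $3\times3$ determinant I would apply the exact one-column-at-a-time telescoping identity, which expresses $\det\lt\mb{A}+\mb{B}\rt-\det\mb{A}$ as a sum of three determinants, each carrying exactly one column drawn from $\mb{B}$ and two columns drawn from $\mb{A}$ (or $\mb{A}+\mb{B}$). Hadamard's inequality bounds each such determinant by the squared column scale $\lt N_m\mc{M}_0^2\Lambda_{\min}^{-2}\rt^2$ times $\|\mb{B}\|$, and the three terms together produce the factor $3$ and the stated bound $\frac{3\lt2\mc{L}-\mc{L}^2\rt}{\lt1-\mc{L}\rt^2}\frac{N_m^3\mc{M}_0^6}{\Lambda_{\min}^6}$. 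The limit $\lim_{B_k\to0}|\epsilon|=0$ then follows at once, since $\mc{L}$ is controlled by the Lipschitz constant of $\mb{M}(\bs\theta)\mb{b}$ (Proposition~\ref{pro:2}), which scales with the field strength and vanishes as $B_k\to0$, killing the prefactor.

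The main obstacle will be the $3\times3$ determinant-perturbation step, specifically matching the constant exactly: the columns of $\mb{A}+\mb{B}$ are a priori bounded only by $\|\mb{A}\|+\|\mb{B}\|$ rather than by the unperturbed scale $N_m\mc{M}_0^2\Lambda_{\min}^{-2}$, so obtaining the clean factor $3$ requires either treating the estimate as leading-order in $\mb{B}$ (legitimate, since we ultimately send $B_k\to0$) or replacing the column scale with the uniform a priori bound $\|\mb{S}^{-1}\|\leq\lt\Lambda_{\min}-\mc{L}\rt^{-1}$ furnished by the Weyl inequality. A secondary care point is the overloaded symbol $\mc{L}$: in the final bound it plays the role of the normalized quantity $\|\bs\Lambda^{-1}\mb{S}_m\|<1$ that renders the denominators $\lt1-\mc{L}\rt$ meaningful, and this normalization should be stated explicitly so as not to conflate it with the raw Lipschitz constant of Proposition~\ref{pro:2}.
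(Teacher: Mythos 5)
The paper gives no in-text proof of this lemma (it is ``deferred to the supplementary material,'' which is not part of the provided source), so there is nothing to compare line by line; judged on its own terms, your argument is correct and is almost certainly the intended one, because the stated bound is exactly what your derivation produces: the prefactor $\frac{2\mc{L}-\mc{L}^2}{(1-\mc{L})^2}\Lambda_{\min}^{-2}$ is precisely your Neumann-series estimate of $\|\mb{S}^{-2}-\bs\Lambda^{-2}\|$ (using $\mb{S}^{-1}=\bs\Lambda^{-1}+\bs\Delta$ with $\|\bs\Delta\|\le\frac{\mc{L}}{(1-\mc{L})\Lambda_{\min}}$ and the symmetry of $\mb{S}$), the factor $N_m^3\mc{M}_0^6\Lambda_{\min}^{-4}$ comes from the three Gram factors via $\|\mb{U}_u\|^2=N_m$ and $\|\mb{M}\|\le\mc{M}_0$ (Proposition~\ref{pro:1}), and the $3$ counts the terms in the column-telescoping of a $3\times3$ determinant. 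The two caveats you flag are the genuine ones, and both are defects of the lemma statement rather than of your proof: first, the factor $3$ is only a leading-order constant, since the exact multilinear expansion yields $|\epsilon|\le 3\|\mb{A}\|^2\|\mb{B}\|+3\|\mb{A}\|\|\mb{B}\|^2+\|\mb{B}\|^3$, so recovering the clean bound requires either absorbing the higher-order terms (harmless for the $B_k\to0$ limit) or pricing the perturbed columns with the Weyl bound $\|\mb{S}^{-1}\|\le(\Lambda_{\min}-\mc{L})^{-1}$; second, $\mc{L}$ in the displayed bound must be read as the normalized quantity $\|\bs\Lambda^{-1}\mb{S}_m\|<1$ guaranteed by Theorem~\ref{thm:solunique}, not the raw Lipschitz constant of Proposition~\ref{pro:2}, since otherwise $(1-\mc{L})$ has no sign guarantee and the dimensions do not balance. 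Your proof stands once these two normalizations are made explicit.
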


\begin{lemma}
\label{lmm:singularity}
Suppose the embedded magnets are placed at the two boundaries (proximal and distal ends), that is $\forall 0\leq N_k<N_m-1$, $L_{k_{N_k}} = 0$ and $L_{k_{N_k+1}} = L$. Then $\tilde{\mc{J}}_{\bs\theta} = 0$.
\end{lemma}

\newpage
\subsection{Proof of Theorem~\ref{thm:twistfree}}
\begin{proof}
The gradient of the potential energy with respect to each joint angle $(\bs\theta_i)$ can be rewritten as:
\begin{equation}
\label{eqn:parEpartheta_i}
    \begin{aligned}
        \nabla_{\bs{\theta}_i}\mc{E}&=\bs\Lambda_i\bs\theta_i-\lt\mb{R}_{0}^{i}\mb{J}_{\mb{R}_r}^{\bs\theta_i}\rt^\top\sum_{k\in\mathbb{K},k\geq i}[\mb{m}_k]_\times\mb{b}_k\\
        &=\bs\Lambda_i\bs\theta_i-\mb{J}_{\mb{R}_l}^{\bs\theta_i}\sum_{k\in\mathbb{K},k\geq i}\mb{R}_{i+1}^{k}\mb{v}_k
    \end{aligned}
\end{equation}
where 
$\mb{v}_k=[\mb{R}_0^{k^\top}\mb{b}_k]_\times^\top\bar{\mb{m}}_k$ is a vector perpendicular to the torsion axis $(\bar{\mb{t}})$ that $\forall k\in\mathbb{K},\mb{v}_k^\top\bar{\mb{t}}=0$. For the $i$th joint, the rotation matrix $\mb{R}_{i+1}^k$ depends on its successor joints. Therefore, we propose a backward iteration method to solve the equilibrium solution.

\textbf{Step 1: Analysis at the boundary joint}

At the boundary joint with index $i=k_{N_m-1}$, where $\mb{R}_{i+1}^k=\mb{I}_3$, we denote $\mb{v}_{k_{N_m-1}}$ by $\mb{v}_k$ for brevity. The left Jacobian associated with $\bs\theta_i$ can be expressed by
\begin{equation}
\label{eqn:leftJacobian}
\mb{J}_{\mb{R}_l}^{\bs\theta_i}=\mb{I}_3+\alpha[\bs\theta_i]_\times+\beta[\bs\theta_i]_\times^2
\end{equation}
where $0<\alpha\leq\frac12$ and $0<\beta\leq\frac16$ are bounded functions of $\bs\theta_i$. Substituting \eqref{eqn:leftJacobian} into \eqref{eqn:parEpartheta_i}, we have:
\begin{equation}
\label{eqn:boundaryparEpartheta}
    \begin{aligned}
       \nabla_{\bs{\theta}_i}\mc{E}&=\bs\Lambda_i\bs\theta_i-\lt\mb{I}_3+\alpha[\bs\theta_i]_\times+\beta[\bs\theta_i]_\times^2\rt\mb{v}_k\\
       &=(\bs\Lambda_i-\alpha[\mb{v}_k]_\times-\beta\bs\theta_i^\top\mb{v}_k\mb{I}_3)\bs\theta_i-\lt1-\beta\|\bs\theta_i\|^2\rt\mb{v}_k.
    \end{aligned}
\end{equation}
The torsional component of \eqref{eqn:boundaryparEpartheta} can be obtained by taking the inner product with $\bar{\mb{t}}$, yielding:
\begin{equation}
\label{eqn:torsionComponents}
\begin{aligned}
    \bar{\mb{t}}^\top\nabla_{\bs{\theta}_i}\mc{E}
       &=\bar{\mb{t}}^\top(\bs\Lambda_i-\alpha[\mb{v}_k]_\times-\beta\bs\theta_i^\top\mb{v}_k\mb{I}_3)\bs\theta_i\\
    &=\lt\lambda_{\min}(\bs\Lambda_i)-\beta\bs\theta_i^\top\mb{v}_k\rt\bar{\mb{t}}^\top\bs\theta_i-\alpha(\bar{\mb{t}}\times\bs\theta_i)^\top\mb{v}_k.
\end{aligned}
\end{equation}

\textbf{Step 2: Solution for $\bs\theta_i^*$ at the boundary joint}

Let the solution of $\bar{\mb{t}}^\top\nabla_{\bs{\theta}_i}\mc{E}=0$ be expressed as $\bs\theta_i^*=s_i\mb{v}_k+\mb{v}_k^\perp$, where $s_i\in\mathbb{R}$ is an undetermined scalar, and $\mb{v}_k^\perp\in\mathbb{R}^3$ is orthogonal to $\mb{v}_k$. Substituting into \eqref{eqn:torsionComponents}, we derive the following system of equations:
\begin{equation*}
    \left\{\begin{aligned}
        &\lt\lambda_{\min}(\bs\Lambda_i)-s_i\beta\|\mb{v}_k\|^2\rt\bar{\mb{t}}^\top\mb{v}_k^\perp=0,\\
        &\alpha\bar{\mb{t}}^\top[\mb{v}_k^\perp]_\times\mb{v}_k=0.
    \end{aligned}\right.
\end{equation*}
From the above, the components of $\bs\theta_i^*$ can be determined as
% \begin{equation}
$s_i = \lambda_{\min}(\bs\Lambda_i)/\lt\beta\|\mb{v}_k\|^2\rt$ and $\mb{v}_k^\perp=\theta_{\mb{t}}^i\bar{\mb{t}}$,
% \end{equation}
where $\theta_{\mb{t}}^i\in\mathbb{R}$ is arbitrary. We proceed to show that $\theta_{\mb{t}}^i=0$ under certain conditions. From Proposition \ref{pro:1}, one has the inequality that $\|\mb{v}_k\|\leq\mc{M}_k$. Combining with $\beta\leq\frac16$ and by Pythagoras' theorem reads
\begin{equation*}
\|\bs\theta_i^*\|^2=\|s_i\mb{v}_k\|^2+\|\theta_{\mb{t}}^i\bar{\mb{t}}\|^2\geq\lt\frac{6\lambda_{\min}(\bs\Lambda_i)}{\mc{M}_k}\rt^2+\theta_{\mb{t}}^{i^2}.
\end{equation*}
It is known from Lemma \ref{lmm:eqpointbound} that the norm of the equilibrium solution is bounded by $\mc{M}_k/\lambda_{\min}(\bs\Lambda_i)$, we obtain:
\begin{equation}
    \theta_{\mb{t}}^{i^2}\leq\lt\frac{\mc{M}_k}{\lambda_{\min}(\bs\Lambda_i)}\rt^2-\lt\frac{6\lambda_{\min}(\bs\Lambda_i)}{\mc{M}_k}\rt^2.
\end{equation}
Using the condition $\mc{M}_k\leq\sqrt6\lambda_{\min}(\bs\Lambda_i)$ derived from \eqref{eqn:thmcond2}, it follows that $\theta_{\mb t}^{i^2}\leq0$. Therefore, the equilibrium solution simplifies to $\bs\theta_i^*=s_i\mb{v}_k$, whereas $\mb{v}_k^\perp=\mb{0}$. Substituting $\bs\theta_i^*$ into \eqref{eqn:boundaryparEpartheta}, we find $s_i=1/\lambda_{\max}(\bs\Lambda_i)$. 

\textbf{Step 3: Recursive solution for inner joints}

We now consider the next joint from backward, whose joint index $i=k_{N_m-1}-1$. Using an important property of the rotation matrix that $\mb{R}(s\bs\theta)\bs\theta=\bs\theta$ holds for any arbitrary constant $s$, we quickly notice
\begin{equation*}
    \mb{R}_{i+1}(\bs\theta_{i+1}^*)\mb{v}_{k}=\mb{R}_{i+1}(s_{i+1}\mb{v}_k)\mb{v}_k=\mb{v}_k.
\end{equation*}
Thus, the potential energy gradient of the current joint retains the same form as \eqref{eqn:boundaryparEpartheta}, ensuring that $\bs\theta_i^*$ remains parallel to $\mb{v}_k$. Proceeding iteratively, we derive the general equilibrium solution:
\begin{equation}
    \label{eqn:newiteration}
    \bs\theta_i^*=\frac{\sum_{k\geq i}\mb{v}_k}{\lambda_{\max}(\bs\Lambda_i)}=\frac{\sum_{k\geq i}[\mb{R}_{0}^{k^\top}(\bs\theta^*)\mb{b}_k]_\times^\top\bar{\mb{m}}_k}{\lambda_{\max}(\bs\Lambda_i)}.
\end{equation}
Although \eqref{eqn:newiteration} does not yield an explicit closed-form solution for $\bs\theta^*$, it nonetheless ensures that $\bar{\mb{t}}^\top\bs\theta_i^*=0$, since $\mb{v}_k^\top\bar{\mb{t}}=0$. This implies that the MeSCR is material-twist-free.
\end{proof}

\newpage
\subsection{Proof of Theorem~\ref{thm:DoF}}
\begin{proof}
\label{proof:DoF}
    Recall that the actuation Jacobian $\mb{J}_{\mb{b}}=\mb{J}_{\bs\theta}\mb{S}^{-1}\mb{M}(\bs\theta)$, the post-multiplied matrix $(\mb{M}(\bs\theta))$ can be rewritten as
\begin{equation}
\label{eqn:rewriteOfM}
    \begin{aligned}
        \mb{M}(\bs\theta)&=\mb{D}_{k_{N_m-1}}^\top\lt\begin{bmatrix}
            \begin{matrix}
                \mb{1}_{k_0}\\
                \mb{0}_{N-k_0}
            \end{matrix}
            \cdots
            \begin{matrix}
                \mb{1}_{k_{N_m-1}}\\
                \mb{0}_{N-k_{N_m-1}}
            \end{matrix}
        \end{bmatrix}
        \otimes \mb{I}_{3\times 3}\rt\\
        &\times
        \operatorname{blkdiag}\left\{\left[\mb{m}_{k_0}\right]_\times,\dots,\left[\mb{m}_{k_{N_m-1}}\right]_\times\right\}
    \end{aligned}
\end{equation}
in which $\mb{D}_{k_{N_m-1}}$ is provided in \eqref{eqn:Dk} and $\mb{1}_k$ denotes the all-ones vector of $k$ dimensions. To determine the rank of $\mb{M}(\bs\theta)$, one starts by noting that
\begin{equation*}
    \operatorname{rk}\lt\begin{bmatrix}
            \begin{matrix}
                \mb{1}_{k_0}\\
                \mb{0}_{N-k_0}
            \end{matrix}
            \cdots
            \begin{matrix}
                \mb{1}_{k_{N_m-1}}\\
                \mb{0}_{N-k_{N_m-1}}
            \end{matrix}
        \end{bmatrix}\otimes\mb{I}_3\rt=3N_m.
\end{equation*}
Applying Sylvester’s rk inequality, we obtain:
\begin{equation}
\label{eqn:rkupper}
\begin{aligned}
    \operatorname{rk}\lt\mb{M}(\bs\theta)\rt&\leq\min\!\left\{\sum_{i=0}^{N_m-1}\!\operatorname{rk}\!\lt\mb{R}_{0}^{k_i}\mb{J}_{\mb{R}_r}^{\bs\theta_{k_i}}\rt, \sum_{i=0}^{N_m-1}\!\operatorname{rk}\!\lt\left[\mb{m}_{k_i}\right]_\times\rt\right\}\\
    &=2N_m,
\end{aligned}
\end{equation}
and
\begin{equation}
\label{eqn:rkbelow}
\begin{aligned}
    \operatorname{rk}\!\lt\mb{M}(\bs\theta)\rt&\geq\sum_{i=0}^{N_m-1}\!\lt\operatorname{rk}\!\lt\mb{R}_{0}^{k_i}\mb{J}_{\mb{R}_r}^{\bs\theta_{k_i}}\rt+\!\operatorname{rk}\!\lt\left[\mb{m}_{k_i}\right]_\times\rt\rt-3N_m\\
    &=2N_m.
\end{aligned}
\end{equation}
Here we used the facts that $\mb{J}_{\mb{R}_r}^{\bs\theta_{k_i}}$ and $\mb{R}_{0}^{k_i}$ are both invertible and the rank of a skew-symmetric matrix remains $2$. Combined \eqref{eqn:rkupper} with \eqref{eqn:rkbelow} leads to
\(
    \operatorname{rk}\lt\mb{M}(\bs\theta)\rt=2N_m.
\)
Consequently, an important conclusion is derived as
\(\operatorname{rk}\lt\mb{J}_\mb{b}\rt\leq\min\left\{6,\operatorname{rk}\lt\mb{M}(\bs\theta)\rt\right\}=\min\left\{6, 2N_m\right\}\).
\end{proof}

\newpage
\subsection{Proof of Lemma~\ref{lmm:approximmersionJacobian}}
\label{append:lemmaandtheorem}
\begin{proof}
Let $\mb{C} = \mb{M}(\bs\theta)\mb{U}_u$ and $\mb{E} = \mb{C}^\top \bs\Lambda^{-2} \mb{C}$. Then the immersion Jacobian can be expressed as $\mc{J}_{\bs\theta}^2 = \det(\mb{E} + \mb{F})$, where $\mb{F} = \mb{C}^\top (\mb{S}^{-2} - \bs\Lambda^{-2}) \mb{C}$. From Proposition~\ref{pro:1}, the following useful bounds hold:
\begin{equation*}
\label{eqn:preprop}
\|\mb{C}\|\!\leq\!\sqrt{N_m} \mc{M}_0,\ 
\|\mb{E}^{-1}\|\!\leq\!\frac{\Lambda_{\min}^2}{N_m \mc{M}_0^2},\  
\det(\mb{E})\!\leq\!\frac{N_m^3 \mc{M}_0^6}{\Lambda_{\min}^6}.
\end{equation*}
Noticing that the determinant admits the first-order expansion
\[
\det(\mb{E} + \mb{F}) 
= \det(\mb{E}) \left(1 + \tr(\mb{E}^{-1} \mb{F}) + \mathcal{O}(\|\mb{F}\|^2)\right),
\]
which implies the following upper bound:
\begin{equation}
\begin{aligned}
    \label{eqn:epsilon-bound-1}
|\det(\mb{E} + \mb{F}) - \det(\mb{E})|&\lesssim \det(\mb{E})  |\tr(\mb{E}^{-1} \mb{F})| \\
&\leq 3 \det(\mb{E})\|\mb{E}^{-1}\| \|\mb{F}\|.
\end{aligned}
\end{equation}

Define the residual matrix $\mb{G} = \mb{S}^{-1} - \bs\Lambda^{-1}$, it reads $\mb{F} = \mb{C}^\top (\mb{G} \bs\Lambda^{-1} + \bs\Lambda^{-1} \mb{G} + \mb{G}^2) \mb{C}$. From Proposition~\ref{pro:2} and under the condition \(\|\bs\Lambda^{-1} \mb{S}_m\| < 1\), the residual satisfies the following bound derived from the Neumann series:
\[
\|\mb{G}\| \leq \frac{\|\bs\Lambda^{-1} \mb{S}_m\| \|\bs\Lambda^{-1}\|}{1 - \|\bs\Lambda^{-1} \mb{S}_m\|} = \frac{\mc{L}}{\Lambda_{\min}(1 - \mc{L})}.
\]

Hence, we can estimate \(\|\mb{F}\|\) as follows:
\begin{equation}
\label{eqn:normF}
\begin{aligned}
\|\mb{F}\| &\leq \|\mb{C}^\top\| \left( 2 \|\mb{G}\| \|\bs\Lambda^{-1}\| + \|\mb{G}\|^2 \right) \|\mb{C}\| \\
&= N_m \mc{M}_0^2 \frac{2 \mc{L} - \mc{L}^2}{\Lambda_{\min}^2 (1 - \mc{L})^2}.
\end{aligned}
\end{equation}

Substituting \eqref{eqn:normF} into \eqref{eqn:epsilon-bound-1}, we obtain
\[
|\epsilon|=|\det(\mb{E}+\mb{F})-\det(\mb{E})|\leq 3 \frac{N_m^3 \mc{M}_0^6}{\Lambda_{\min}^6} \frac{2 \mc{L} - \mc{L}^2}{(1 - \mc{L})^2},
\]
as claimed. Finally, since $\mc{L} \to 0$ as \(B_k \to 0\), it follows that \(|\epsilon| \to 0\) in this limit.
\end{proof}

\newpage
\subsection{Proof of Lemma~\ref{lmm:singularity}}
\begin{proof}
Under the assumed boundary placement, it is simplified that
\(\bs\Lambda_m \otimes \mb{I}_3 = \Lambda_{k_{N_m - 1}}^m \left( \mb{v} \otimes \mb{I}_3 \right) \left( \mb{v}^\top \otimes \mb{I}_3 \right)
\),
where $\mb{v} = [\mb{0}_{N_k}^\top,\, \mb{1}_{N_m - N_k}^\top]^\top$. Substituting this into the expression for $\tilde{\mc{J}}_{\bs\theta}$, we obtain
\[
\left( \mb{v}^\top \otimes \mb{I}_3 \right) \mb{U}_u \mb{D}_m = \sum_{i = N_k + 1}^{N_m - 1} \left[ \mb{m}_{k_i} \right]_\times = \left[ \sum_{i = N_k + 1}^{N_m - 1} \mb{m}_{k_i} \right]_\times.
\]
Consequently, the rank of the expression for $\tilde{\mc{J}}_{\bs\theta}$ shows that
\begin{equation*}
    \begin{aligned}
&\operatorname{rk}\!\lt\mb{U}_u^\top\mb{D}_m^\top\!\lt\bs\Lambda_m\!\otimes\mb{I}_3\rt\mb{U}_u\mb{D}_m\rt\!\\&=\operatorname{rk}\!\lt\!\lt\mb{v}^\top\!\!\otimes\mb{I}_3\rt\!\mb{U}_u\mb{D}_m\rt=\operatorname{rk}\lt\left[\sum_{i=N_k+1}^{N_m-1}\mb{m}_{k_i}\right]_\times\rt\leq2.
    \end{aligned}
\end{equation*}
That is, the actuation mapping becomes deficient in rank, and we conclude that $\tilde{\mc{J}}_{\bs\theta} = 0$.
\end{proof}

\newpage
\section{Table and Figures}

\subsection{Figure~S1}
\begin{figure*}[ht]
    \centering
    \includegraphics[width=1.0\textwidth]{figures/Fig3.pdf}
    \caption{a.) Optimization results under $2^2$ magnetic orientation combinations. The background color maps represent normalized performance values obtained via exhaustive search. Black lines indicate trajectories of the gradient-based optimization. b.) Optimization results across $2^3$ magnetic orientations, comparing the gradient-based method with a heuristic method. Bar plots show the configuration at optimized manipulability with convergence curves overlaid.}
    \label{fig:numericalresult}
\end{figure*}

\newpage
\subsection{Figure~S2}
% \begin{figure}[ht]
%     \centering
%     \includegraphics[width=0.5\textwidth]{figures/Fig4.pdf}
%     \caption{
%     Experimental setup: A permanent magnet attached to a 6-DoF robot arm generates a spatially varying dipole field. 
%     The MeSCR is placed inside a transparent container, while a downward-facing camera records its deformation for subsequent shape extraction and Jacobian estimation. 
%     Insets show details of the MeSCR geometry (outer diameter \SI{1.2}{mm}, embedded magnet length \SI{5}{mm}) and an example of the extracted centerline.
%     }
%     \label{fig:experimentsetuo}
% \end{figure}

% \newpage
% \subsection{Figure~S3}
\begin{figure*}[ht]
    \centering
    \includegraphics[width=\textwidth]{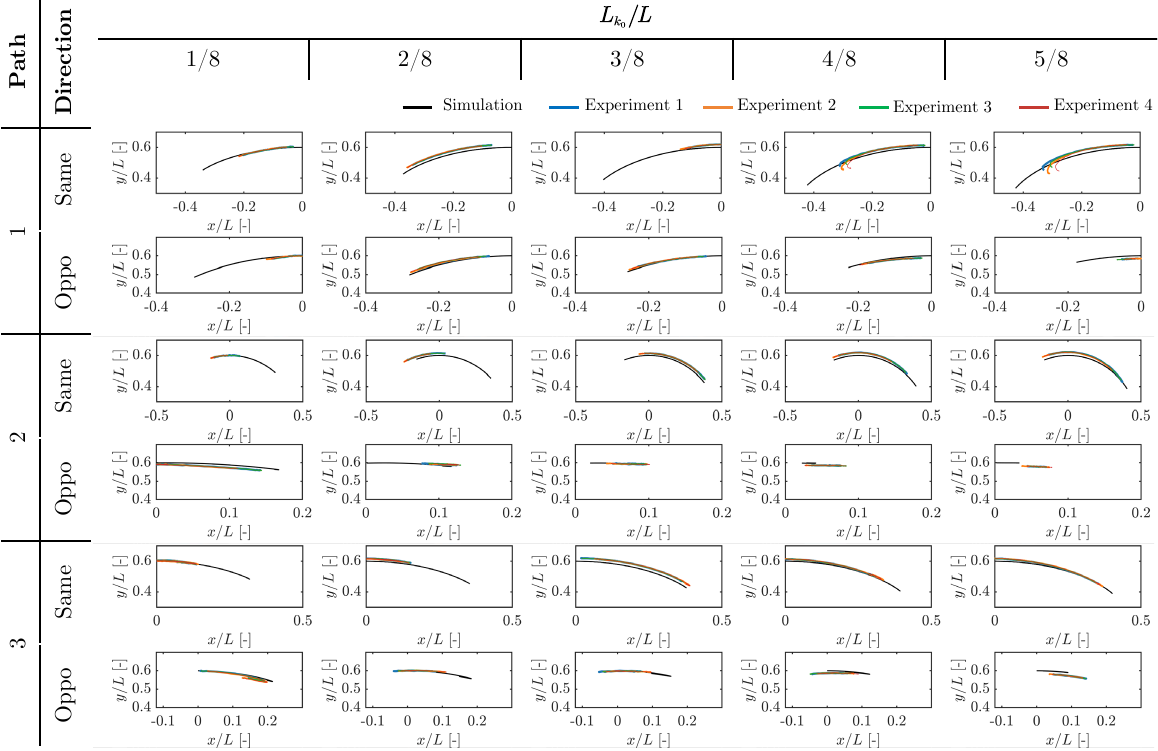}
    \caption{Comparison between simulated and measured end point positions of the MeSCR across different paths, magnetization directions, and movable magnet placements.
    Each panel reports the normalized planar end point position under dipole field actuation for a fixed combination of task path, magnetization direction, and movable magnet position
    $L_{k_0}/L$. The black curve denotes the simulation result, and the colored curves correspond to four independent experimental trials. The data show that the simulated end point positions closely track the measured trajectories over all three paths, with consistent agreement observed across both magnetization directions and across the full range of movable magnet placements. These results confirm that the elastic equilibrium model accurately captures the deformation response of the MeSCR under general dipole field excitation.}
    \label{fig:placeholder}
\end{figure*}

\newpage
\subsection{Table~S1}
% \section{Table}
\begin{table}[ht]
\centering
\begin{threeparttable}
\caption{Comparison of the optimal placement $L^{\ast}_{k_0}/L$ under three task paths and three magnet lengths $L_m$.}
\label{tab:compLm}
\begin{tabular}{lc|ccccc|ccccc}
\hline
\rowcolor[HTML]{EFEFEF} 
\cellcolor[HTML]{EFEFEF} & \cellcolor[HTML]{EFEFEF} & \multicolumn{5}{c|}{\cellcolor[HTML]{EFEFEF}Same Direction} & \multicolumn{5}{c}{\cellcolor[HTML]{EFEFEF}Opposite Direction} \\ \cline{3-12} 
\rowcolor[HTML]{EFEFEF} 
\cellcolor[HTML]{EFEFEF} & \cellcolor[HTML]{EFEFEF} & \multicolumn{5}{c|}{\cellcolor[HTML]{EFEFEF}$L_{k_0}^*/L$ \textbf{based on kinematic performance:}} & \multicolumn{5}{c}{\cellcolor[HTML]{EFEFEF}$L_{k_0}^*/L$ \textbf{based on kinematic performance:}} \\ \cline{3-12} 
\rowcolor[HTML]{EFEFEF} 
\multirow{-3}{*}{\cellcolor[HTML]{EFEFEF}\begin{tabular}[c]{@{}l@{}}$L_m$\\ $[\mr{mm}]$\end{tabular}} & \multirow{-3}{*}{\cellcolor[HTML]{EFEFEF}\textbf{Path}} & $\mc{Z}_{\mr{vol}}$ & \multicolumn{1}{l}{\cellcolor[HTML]{EFEFEF}$\mc{Z}_{\mr{dis}}$} & $\mc{Z}_{\kappa}$ & $\mc{Z}_{\mr{vol}}/\mc{Z}_{\mr{dis}}$ & $\mc{Z}_{\mr{vol}}/\mc{Z}_{\kappa}$ & \cellcolor[HTML]{EFEFEF}$\mc{Z}_{\mr{vol}}$ & \cellcolor[HTML]{EFEFEF}$\mc{Z}_{\mr{dis}}$ & \cellcolor[HTML]{EFEFEF}$\mc{Z}_{\kappa}$ & \cellcolor[HTML]{EFEFEF}$\mc{Z}_{\mr{vol}}/\mc{Z}_{\mr{dis}}$ & \cellcolor[HTML]{EFEFEF}$\mc{Z}_{\mr{vol}}/\mc{Z}_{\kappa}$ \\ \hline
 & Path 1 & 0.69 & 0.29 & 0.34 & 0.69 & 0.69 & 0.19 & 0.69 & 0.69 & 0.19 & 0.19 \\
 & Path 2 & 0.69 & 0.22 & 0.27 & 0.57 & 0.62 & 0.19 & 0.69 & 0.69 & 0.19 & 0.19 \\
\multirow{-3}{*}{5} & Path 3 & 0.69 & 0.26 & 0.31 & 0.52 & 0.53 & 0.19 & 0.69 & 0.69 & 0.19 & 0.19 \\ \hline
 & Path 1 & 0.76 & 0.31 & 0.35 & 0.76 & 0.72 & 0.16 & 0.76 & 0.76 & 0.16 & 0.16 \\
 & Path 2 & 0.76 & 0.20 & 0.31 & 0.64 & 0.66 & 0.16 & 0.76 & 0.76 & 0.16 & 0.16 \\
\multirow{-3}{*}{3} & Path 3 & 0.76 & 0.25 & 0.31 & 0.56 & 0.58 & 0.16 & 0.76 & 0.76 & 0.16 & 0.16 \\ \hline
 & Path 1 & 0.80 & 0.28 & 0.37 & 0.80 & 0.80 & 0.15 & 0.80 & 0.80 & 0.15 & 0.35 \\
 & Path 2 & 0.80 & 0.19 & 0.28 & 0.80 & 0.78 & 0.15 & 0.80 & 0.80 & 0.15 & 0.15 \\
\multirow{-3}{*}{2} & Path 3 & 0.80 & 0.24 & 0.33 & 0.80 & 0.80 & 0.15 & 0.80 & 0.80 & 0.15 & 0.15 \\ \hline
\textbf{Av.} & -- & 0.75 & 0.25 & 0.32 & 0.68 & 0.69 & 0.17 & 0.75 & 0.75 & 0.17 & 0.19 \\
\textbf{SD} & -- & 0.05 & 0.04 & 0.03 & 0.12 & 0.10 & 0.02 & 0.05 & 0.05 & 0.02 & 0.06 \\
\textbf{CV} & -- & 0.07 & 0.16 & 0.10 & 0.17 & 0.14 & 0.10 & 0.07 & 0.07 & 0.10 & 0.33 \\ \hline
\end{tabular}
\end{threeparttable}
\begin{tablenotes}
    \footnotesize
    \item \textbf{Av.}: Average, \textbf{SD}: Standard Deviation, \textbf{CV}: Coefficient of Variation.
\end{tablenotes}
\end{table}

\newpage
\subsection{Figure~S4}
\begin{figure*}[ht]
    \centering
    \includegraphics[width=\textwidth]{figures/Fig7.pdf}
    \caption{Optimized magnet configurations for MeSCRs with three-embedded magnets under four task paths.
    For each task path and each kinematic performance index, the left subpanel illustrates the dipole-field actuation path, and the right subpanel shows the optimized embedded-magnet layout: horizontal bars indicate the optimal magnet positions along the backbone, and spherical plots depict the corresponding optimal orientations. 
    Blue and red denote the two movable magnets, and black denotes the fixed distal magnet.}
    \label{fig:comprehensivecomp}
\end{figure*}